\DeclareMathOperator*{\argmin}{arg\,min}
\newcommand{\E}{\mathbb{E}}
\newcommand{\Prob}{p} 
\newcommand{\prob}{\Prob}
\newcommand{\ind}{\mathbbm{1}}
\theoremstyle{plain}
\newtheorem{theorem}{Theorem}[section]
\newtheorem{proposition}[theorem]{Proposition}
\newtheorem{lemma}[theorem]{Lemma}
\theoremstyle{definition}
\theoremstyle{remark}
\icmltitlerunning{Structured Stochastic Gradient MCMC}
\begin{document}

\twocolumn[
\icmltitle{Structured Stochastic Gradient MCMC}



\icmlsetsymbol{equal}{*}

\begin{icmlauthorlist}
\icmlauthor{Antonios~Alexos}{equal,cs}
\icmlauthor{Alex~Boyd}{equal,stats}
\icmlauthor{Stephan~Mandt}{cs,stats}
\end{icmlauthorlist}

\icmlaffiliation{cs}{Department of Computer Science, University of California, Irvine, USA}
\icmlaffiliation{stats}{Department of Statistics, University of California, Irvine, USA}

\icmlcorrespondingauthor{Antonios~Alexos}{aalexos@uci.edu}
\icmlcorrespondingauthor{Alex~Boyd}{alexjb@uci.edu}

\icmlkeywords{Machine Learning, ICML}

\vskip 0.3in
]



\printAffiliationsAndNotice{\icmlEqualContribution} 

\begin{abstract}
Stochastic gradient Markov Chain Monte Carlo (SGMCMC) is a scalable algorithm for asymptotically exact Bayesian inference in parameter-rich models, such as Bayesian neural networks. However, since mixing can be slow in high dimensions, practitioners often resort to variational inference (VI). Unfortunately, VI makes strong assumptions on both the factorization and functional form of the posterior. To relax these assumptions, this work proposes a new non-parametric variational inference scheme that combines ideas from both SGMCMC and coordinate-ascent VI. The approach relies on a new Langevin-type algorithm that operates on a "self-averaged" posterior energy function, where parts of the latent variables are averaged over samples from earlier iterations of the Markov chain. This way, statistical dependencies between coordinates can be broken in a controlled way, allowing the chain to mix faster. This scheme can be further modified in a ``dropout'' manner, leading to even more scalability. We test our scheme for ResNet-20 on CIFAR-10, SVHN, and FMNIST. In all cases, we find improvements in convergence speed and/or final accuracy compared to SGMCMC and parametric VI.
\end{abstract}

\section{Introduction}
\label{sec:intro}
\normalem

There has been much recent interest in deep Bayesian neural networks (BNN) due to their reliable confidence estimates and generalization properties \citep{wilson2020bayesian, jospin2020hands,cardelli2019statistical}. BNNs rely on ensemble averages over model parameters typically obtained from Markov chain Monte Carlo (MCMC) algorithms, which contrasts to regular neural networks that depend on a single set of parameters. The sheer size of these models requires scalable MCMC approaches based on inexpensive stochastic gradients, of which stochastic gradient Markov chain Monte Carlo (SGMCMC) algorithms are most widely used \citep{li2016preconditioned, welling2011bayesian, NIPS2013_309928d4}. These algorithms owe their scalability to approximating gradients via mini-batching. 
    
The main downside of using SGMCMC algorithms is their slow mixing rate in high dimensions. A  less ambitious goal than sampling from the true high-dimensional posterior is to approximate the latter by a simpler, typically factorized distribution, as done in variational inference (VI)~\citep{blei2017variational,zhang2018advances}. However, classical VI methods make strong parametric assumptions on the posterior, such as modeling it using product of univariate Gaussian distributions. These distributional assumptions are frequently over-simplistic in high-dimensional models, where the posterior can be highly multi-modal and possibly heavy-tailed. Another downside is that the variational approximation typically underestimates the posterior variance, leading to poorly calibrated uncertainties and overfitting \citep{ormerod2010explaining,giordano2015linear, zhang2018advances}.

In this work, we derive a new \emph{approximate} SGMCMC approach that takes inspiration from \emph{structured} VI \citep{saul1996exploiting,wainwright2008graphical}, a version of VI that maintains selected posterior correlations while breaking others. While our approach remains a sampling algorithm resembling SGMCMC, we speed up the mixing time by systematically breaking pre-specified posterior correlations. Our approach thus adapts the partial factorization aspect from structured mean-field VI, but it remains fully non-parametric and makes no assumptions on the functional form of the approximate posterior. For this reason, we name our approach \emph{structured SGMCMC}.
    
We start by reviewing the functional view on VI, deriving the optimal variational distribution for a given posterior subject to factorization constraints. Our main contribution is then to show that one can sample from this target distribution by running SGMCMC on a modified energy function. This energy function is inspired by the partially-marginalized log joint distribution as encountered in coordinate-ascent VI~\citep{saul1996exploiting,bishop2006pattern}. We can tractably approximate the involved expectations by averaging selected latent variables over samples from the Markov chain. The resulting "self-averaged" energy function defines un-normalized distribution over the remaining coordinates. We can sample from this distribution using Langevin dynamics or any other SGMCMC scheme.

While the resulting posterior approximation can be shown to have favorable properties (e.g., it can be multi-modal and it mixes much faster), the original sampling algorithm is not sufficiently fast for high-dimensional models. Therefore, we provide a more robust and computationally efficient approximation to the procedure that allows for interpolation between regular SGMCMC and our structured SGMCMC by taking inspiration from dropout techniques. 
Both methods are compatible with \emph{any} Markovian SGMCMC algorithm, including Langevin dynamics and stochastic gradient Hamiltonian Monte Carlo.  
  
In sum, our contributions are as follows: \vspace{-0.7em}
\begin{itemize}[leftmargin=12pt,itemsep=1pt]
        
        \item For any differentiable Bayesian model, we specify the optimal approximate posterior (in KL divergence) subject to factorization constraints and show that we can sample from it using a novel self-averaged SGMCMC scheme. 
        \item The new MCMC/VI hybrid allows sampling from a fully joint posterior, a completely factorized posterior, and any in-between. It thereby trades approximation quality for speed. 
        \item We extend this scheme further by making it more scalable with a dropout-inspired approximation. This new scheme has a hyperparameter that enables a smooth interpolation between full SGMCMC and a "mean-field" version where all posterior correlations are broken. 
        \item We show in both small and large scale experiments that our method well approximates posterior marginals and gives improved results over SGMCMC and parametric VI on Resnet-20 architectures on CIFAR-10, Fashion MNIST, and SVHN in terms of runtime and/or final accuracy. Additionally, all code and implementations have been made publicly available.\footnote{ \url{https://github.com/ajboyd2/pytorch_lvi}}
\end{itemize}
Our paper is structured as follows: \cref{sec:related} presents the related work to our proposal, \cref{sec:preliminaries} introduces preliminaries regarding the energy function and the stochastic gradient updates, \cref{sec:method,sec:method_dropout} derive our proposed methods, \cref{sec:experiments} details experiments and their results, and \cref{sec:conclusions} contains our concluding thoughts.

\section{Related Work}
\label{sec:related}

Our work connects both to (stochastic) variational inference 
\citep{bishop2006pattern, hoffman2013stochastic, hoffman2015structured, wang2013variational, tierney1989fully, mackay1992information, bickel2007mathematical, ranganath2014black, ranganath2013adaptive, nalisnick2019dropout, ambrogioni21a, weilbach20a, silvestri2021embedded, papamakarios2021normalizing, liu2016stein} and scalable MCMC \citep{welling2011bayesian, robbins1951stochastic, li2016preconditioned, hoffman2020black, chen2014stochastic, ma2017stochastic, heek2019bayesian, leimkuhler2019partitioned, zhang2019cyclical, mandt2017stochastic, NIPS2013_309928d4}.
For space limitations, we focus on the most related work at the intersection of both topics. 


Among the earliest works to hybridize both approaches was \citep{de2001variational} who constructed a variational proposal distribution in the Metropolos-Hastings step of MCMC. An improved approach to that was introduced in \citep{habib2018auxiliary}, where by introducing low-dimensional auxiliary variables they fit a more accurate approximating distribution. Other related advances to MCMC methods were proposed by \citet{levy2017generalizing} who developed a method to train MCMC kernels with NNs, and \citet{Tongzhou2018, gong2018meta} who leveraged meta learning schemes in SGMCMC methods.

Most recent work focuses on connections between VI and stochastic gradient-based MCMC, or between VI and stochastic gradient descent (SGD). For example, \citet{mandt2016variational,mandt2017stochastic} and \citet{duvenaud2016early} consider SGD as a type of variational inference, but their approaches did not attempt to close the gap to exact MCMC. Other works aim at explicitly interpolating between both methods.  \citet{domke2017divergence} proposes a divergence bound for hybridizing VI and MCMC, essentially by running Langevin dynamics on a tempered evidence lower bound (ELBO).
\citet{salimans2015markov} embody MCMC steps into the variational inference approximation. 
\citet{ahn2012bayesian} improve stochastic gradient Langevin dynamics by leveraging the central limit theorem and using the estimated inverse Fisher information matrix to sample from the approximate posterior distribution. 
\citet{rezende2015variational} interpreted the path of an MCMC algorithm as a variational distribution, and then fitting parameters to tighten a variational bound. Recently, \citet{hoffman2020black} interpreted (parametric) VI as approximate Langevin dynamics and showed that both algorithms have similar transient dynamics. 


In contrast to all these approaches, our method is inspired by coordinate ascent variational inference \citep{bishop2006pattern} but uses Langevin updates to generate samples from the target distribution that respects an imposed independence structure. 

\section{Preliminaries}
\label{sec:preliminaries}

\normalem

Variational inference (VI) differs from MCMC in two regards: (1) it imposes a structured (e.g., fully-factorized) approximation of the posterior for tractability, and (2) it often makes parametric (e.g., Gaussian) assumptions. Is it possible to construct a modified scheme that only relies on the assumption (1), inheriting the non-parametric nature of MCMC while breaking posterior correlations in a controlled manner? We will show how such a scheme can be realized. 

We begin by introducing the setup and common notation. 
Given data $\mathcal{D}=\{(x_i,y_i)\}_{i=1,\dots,N}$, parameters $\theta$, a proper prior distribution $p(\theta)$, and a likelihood $p(\mathcal{D}|\theta) = \prod_{i=1}^N p(y_i|x_i,\theta)$, we represent the Bayesian posterior as a Boltzmann distribution:
\begin{align}
p(\theta|\mathcal{D}) & \propto \exp\{-U(\theta)\}
;\nonumber \\
 U(\theta) & := -\sum_{(x,y)\in \mathcal{D}} \log p(y|x,\theta) - \log p(\theta). \label{eq:energy_dist}
\end{align}
We call $U$ the \emph{posterior energy function}. Note that the posterior is typically intractable due to the normalizing constant. 


Stochastic gradient MCMC (SGMCMC) is a class of scalable MCMC algorithms that can produce posterior samples through gradients on minibatches of data.
These algorithms are largely derived from discretized approximations of continuous-time diffusion processes. Examples of these algorithms include stochastic gradient Langevin dynamics (SGLD) \citep{welling2011bayesian}, preconditioned SGLD (pSGLD) \citep{li2016preconditioned}, and stochastic gradient Hamiltonian Monte Carlo (SGHMC) \citep{chen2014stochastic}.

SGMCMC algorithms use minibatches $\tilde{\mathcal{D}}$ from $\mathcal{D}$ to produce an unbiased estimate of the posterior energy function $U(\theta)$:
\begin{equation}
\begin{aligned}
\hat{U}(\theta;\tilde{\mathcal{D}}) = -\frac{N}{|\tilde{\mathcal{D}}|} \sum_{(x,y)\in\tilde{\mathcal{D}}} \log p(y|x,\theta) - \log p(\theta). \label{eq:energy_approx}
\end{aligned}
\end{equation}
These minibatches enable a sequence of samples from the posterior. For instance, the SGLD update rule is
\begin{equation}
\begin{aligned}
\theta^{(t+1)}  = \theta^{(t)} -\frac{\epsilon_t}{2} \nabla_\theta \hat{U}(\theta^{(t)}; \tilde{\mathcal{D}}_t) + \xi_t, \label{eq:sgld}
\end{aligned}
\end{equation}
where $\xi_t \sim \mathcal{N}(0,\epsilon_t I)$.
Similar rules for pSGLD and SGHMC can be found in the Appendix.
All of these update rules produce a \emph{chain} of samples up to time step $t$ that ultimately form an empirical distribution $\hat p^{(t)}(\theta | \mathcal{D})$. Should the algorithms converge, then $\lim_{t\rightarrow \infty} \hat p^{(t)}(\theta | \mathcal{D}) = p(\theta | \mathcal{D})$.

\section{Structured SGMCMC}
\label{sec:method}

SGMCMC methods produce a fully joint posterior distribution over parameters $\theta$. For models with a large number of parameters, this can lead to various complications due to the curse of dimensionality such as slow convergence times and  unexplored posterior modes. A viable solution is to simplify the sampling task by breaking dependencies in the posterior distribution, leveraging ideas commonly used in VI. 

We begin by reviewing structured variational inference and present a novel SGMCMC-like algorithm that provides samples from a factorized approximate posterior. We then prove that the only stationary distribution of the scheme is a factorized Boltzmann distribution that minimizes KL divergence to the true posterior, subject to the factorization constraint. 

\paragraph{Structured Variational Inference Revisited.} We begin by reviewing structured variational inference \citep{saul1996exploiting} and formulate it as a non-parametric KL minimization problem. To achieve partial factorization, we must first partition $\theta$ into $M>1$ distinct, \emph{mutually independent} groups: $\theta_1, \dots, \theta_M$. This partitioning structure is assumed to be known a priori. We will denote the distribution that respects this partitioning structure as $q(\theta) = \prod_{i=1}^M q_i(\theta_i)$. As common in VI~\citep{zhang2018advances}, we would like $q(\theta)$ to best approximate the true posterior distribution $p(\theta|\mathcal{D})$ in terms of KL-divergence,
\begin{align}
q(\theta) = \argmin_q \mathbb{E}_{\theta\sim q}\left[\log \frac{q(\theta)}{p(\theta|\mathcal{D})}\right]. \label{eq:kl}
\end{align}

To derive a formal solution, we introduce some notation. For any $i \in \{1, ..., M\}$, let $\theta_{\lnot i}$ denote the parameters that complement $\theta_i$, i.e., $\theta = \{\theta_i, \theta_{\lnot i}\}$.
We then define a structured energy function:
\begin{multline}
    U^{(S)}(\theta)=\sum_{i=1}^M U^{(S)}_i(\theta_i),\; \text{ with } U_i^{(S)}(\theta_i) :=\\ \mathbb{E}_{\tilde\theta \sim q} U(\{\theta_i, \tilde\theta_{\lnot i}\}) := - \mathbb{E}_{\tilde\theta \sim q} \log p(\theta_i, \tilde\theta_{\lnot i}, \mathcal{D}).
    \label{eq:US}
\end{multline}
That is, we first define the marginals $U_i^{(S)}(\theta_i)$, where we marginalize $U(\theta)$ with respect to all $q(\theta)$-factors except $q_i(\theta_i)$, and then sum up these marginals to define $U^{(S)}(\theta)$. A similar partial marginalization procedure is carried out for conjugate exponential family distributions in coordinate ascent VI \citep{bishop2006pattern}, where the resulting marginals are used to update the variational distribution in closed form. Note that in most modern applications, conditional conjugacy is violated \citep{zhang2018advances, blei2017variational}. 

We attribute the following theorem to \citep{saul1996exploiting}:

\begin{proposition} \citep{saul1996exploiting}.
The unique solution to the KL minimization problem given in Eq.~\ref{eq:kl} is given by the Boltzmann distribution $q(\theta) \propto  \exp\{-\sum_{i=1}^M U_i^{(S)}(\theta_i)\}$. \emph{Please refer to \cref{proof_proposition1} for the proof.}
\label{thm:kl_min}
\end{proposition}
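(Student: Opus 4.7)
The plan is to treat this as a constrained calculus-of-variations problem: minimize the KL functional over the space of product densities $q(\theta) = \prod_{i=1}^M q_i(\theta_i)$ subject to each factor being a probability density. First I would substitute the factorization into the KL and expand, using $\log p(\theta|\mathcal{D}) = -U(\theta) - \log Z$ where $Z$ is the posterior normalizer (a constant independent of $q$):
\begin{equation*}
\mathrm{KL}(q \,\|\, p(\cdot|\mathcal{D})) = \mathbb{E}_q U(\theta) + \sum_{i=1}^M \mathbb{E}_{q_i} \log q_i(\theta_i) + \log Z.
\end{equation*}
The entropy term decouples across factors because $q$ is a product, while the energy term couples them.

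Next I would do coordinate-wise minimization: fix all $q_j$ with $j \neq i$ and minimize over $q_i$, subject to $\int q_i(\theta_i)\, d\theta_i = 1$. Writing $\mathbb{E}_q U(\theta) = \mathbb{E}_{q_i}\bigl[\mathbb{E}_{\tilde\theta_{\lnot i} \sim q} U(\{\theta_i, \tilde\theta_{\lnot i}\})\bigr] = \mathbb{E}_{q_i} U_i^{(S)}(\theta_i)$, the functional to minimize (with Lagrange multiplier $\lambda$ for the normalization constraint) becomes $\int q_i(\theta_i)\bigl[\log q_i(\theta_i) + U_i^{(S)}(\theta_i)\bigr]\,d\theta_i + \lambda\bigl(\int q_i - 1\bigr)$. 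Taking the functional derivative with respect to $q_i$, setting it to zero, and solving yields $q_i(\theta_i) \propto \exp\{-U_i^{(S)}(\theta_i)\}$. Multiplying these optimal factors together gives exactly $q(\theta) \propto \exp\{-\sum_{i=1}^M U_i^{(S)}(\theta_i)\}$, matching the claimed Boltzmann form.

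For uniqueness, I would appeal to strict convexity of the per-coordinate subproblem: when all $q_j$ ($j\neq i$) are fixed, the objective is strictly convex in $q_i$ (the entropy term is strictly convex and the coupling term is linear in $q_i$), so the coordinate optimum is unique. Then I would argue that the self-consistent fixed point of this coordinate system is characterized by the stated Boltzmann form, noting that any optimum of the joint problem must simultaneously satisfy every coordinate's optimality condition.

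The main obstacle is that strict convexity per coordinate does not by itself yield a globally unique minimizer over the non-convex space of product densities --- in general mean-field VI can have multiple local optima. I would therefore be careful in stating what "unique" means here: the uniqueness is of the self-consistent fixed-point characterization, i.e., any minimizer must be of the stated Boltzmann form with $U_i^{(S)}$ defined through expectations against that same minimizer. Making this precise (and pointing the reader to the referenced appendix for the fuller argument) is the subtle step; the variational calculus itself is routine.
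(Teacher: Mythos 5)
Your derivation of the optimal factors is essentially the same as the paper's: both expand the KL into an entropy term that decouples over the factors plus a coupling term $\mathbb{E}_q U(\theta)$, take the functional derivative with respect to a single $q_i$ holding the others fixed, and solve the stationarity condition to obtain $q_i(\theta_i) \propto \exp\{-U_i^{(S)}(\theta_i)\}$ (your explicit Lagrange multiplier for normalization versus the paper's implicit absorption into the proportionality constant is immaterial). Where you differ is on uniqueness. The paper disposes of it in one line by appealing to convexity of the KL divergence in function space, concluding that the stationary point is a global optimum and unique. You instead observe that per-coordinate strict convexity does not give global uniqueness because the set of product densities is not convex (a mixture of product densities is generally not a product density), and that mean-field objectives can in general have multiple local optima, so the "uniqueness" is really of the self-consistent fixed-point characterization: any minimizer must have the stated Boltzmann form with $U_i^{(S)}$ defined via expectations under that same minimizer. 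This caveat is well taken --- your reading is the more defensible one, and the paper's convexity argument, as stated, does not by itself rule out multiple self-consistent solutions. If you wanted to match the paper's stronger claim you would need an additional argument (or an additional hypothesis on $U$); as written, your version proves the characterization result, which is what the rest of the paper actually uses.
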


While the proposition provides a statement of optimality, at this point it is unclear how the theoretically optimal distribution can be realized. This will be discussed next.

\paragraph{Approximating the Energy Function.}
Having a well-defined energy function $U^{(S)}$ (\cref{eq:US}) should in principle allow us to use standard SGMCMC methods to approximate the posterior $q(\theta)$ with samples from a chain of parameter iterates. 
While the marginals  $U_i^{(S)}(\theta_i)$ in \cref{eq:US} do not have a closed-form solution, we can still approximate them, assuming we have a distribution $q$ that we can sample from. In addition, we can sample a mini-batch $\tilde{D}$ to estimate \cref{eq:US} more scalably,
\begin{align}
 \hat{U}^{(S)}(\theta; \tilde{\mathcal{D}}) = \sum_{i=1}^M \mathbb{E}_{\tilde{\theta} \sim q} \hat{U}(\{\theta_i, \tilde{\theta}_{\lnot i}\}; \tilde{\mathcal{D}}), \label{eq:var_sum}
\end{align}
where $\hat{U}(\cdot)$ is defined in \cref{eq:energy_approx}. We stress that \cref{eq:var_sum} is an finite-sample empirical expectation.

\paragraph{Defining the Variational Distribution.} It remains to define how we construct the distribution $q(\tilde \theta)$. 
Assume that we already have a chain of parameter iterates $\{\theta^{(0)}, \cdots, \theta^{(t)}\}$, for example, initialized by some MCMC scheme or consisting just of  $\{\theta^{(0)}\}$. Based on the grouping of parameters, we can define an empirical distribution of marginals $\hat{q}_i$ and a corresponding factorized distribution $\hat{q}$ as
\begin{align}
    \hat{q}(\tilde \theta) := \prod_{i=1}^M \hat{q}_i(\tilde \theta_i); \; \hat{q}_i(\tilde \theta_i) := \text{Unif}(\{\theta^{(0)}_i, \cdots, \theta^{(t)}_i\}).
\end{align}
Thus, a sample from this factorized distribution results in elements from the parameter chain, uniformly chosen at random and \emph{independently} for every group,
\begin{align*}
    {\tilde \theta} \sim \hat{q} \Leftrightarrow {\tilde \theta} = (\theta_1^{(t_1)}, \cdots, \theta_M^{(t_M)}),\; t_j \sim {\rm Unif}(0, \dots, t).
\end{align*}

Using a chain of parameter iterates to define $\hat{q}$ thus allows us to estimate $\hat{U}^{(S)}(\theta; \tilde{D})$ of \cref{eq:var_sum} empirically  
for every timestep $t$ with a Monte Carlo approximation. 
This enables an update step for the next parameter iterate: 
\begin{equation}
\begin{aligned}
\theta^{(t+1)}  = \theta^{(t)} -\frac{\epsilon_t}{2} \nabla_\theta \hat{U}^{(S)}(\theta^{(t)}; \tilde{\mathcal{D}}_t) + \xi_t,\label{eq:s_sgld}
\end{aligned}
\end{equation}
where $\xi_t \sim \mathcal{N}(0,\epsilon_t I)$. We term this algorithm \emph{structured SGLD} (S-SGLD); similar rules for structured variants of pSGLD (S-pSGLD) and SGHMC (S-SGHMC) can be found in the Appendix. The general framework is termed structured SGMCMC (S-SGMCMC). 

\paragraph{Algorithm Summary.} A summary of the algorithm is as follows. We first initialize the parameters $\theta$ and partition them into $M$ groups. The main algorithm evolves in a loop. For every timestep, we sample a mini-batch of the data. Then, for each group $i$ we sample from the empirical distribution $\hat q_{\lnot i}^{(t)}$ and assign the samples as $\tilde{\theta}_{\lnot i}^{(t)}$. These samples, along with the current parameter values $\theta_i^{(t)}$, are used to estimate the marginal energy posterior functions $\hat U_{i}^{(S,t)}$. These are summed across all groups and the result is then derived to execute a SGMCMC update step to produce new samples. A visualization of this can be seen in \cref{fig:sampling_procedure} while the full procedure is detailed in \cref{sec:algo}. 

\begin{figure}
    \centering
    \includegraphics[width=0.95\linewidth]{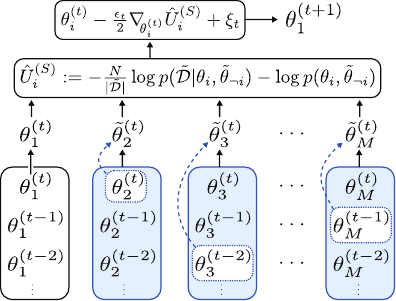}
    \caption{Visualization of sampling procedure for the first of $M$ parameter groups using S-SGLD. The bottom boxes represent the approximate posterior of samples $\hat{q}$---one for each group. The blue boxes are those to be sampled from while the dashed lines indicate which specific values were sampled when estimating the marginal energy for group 1.}
    \label{fig:sampling_procedure}
\end{figure}

\begin{figure*}
    \centering
    \includegraphics[width=0.95\textwidth]{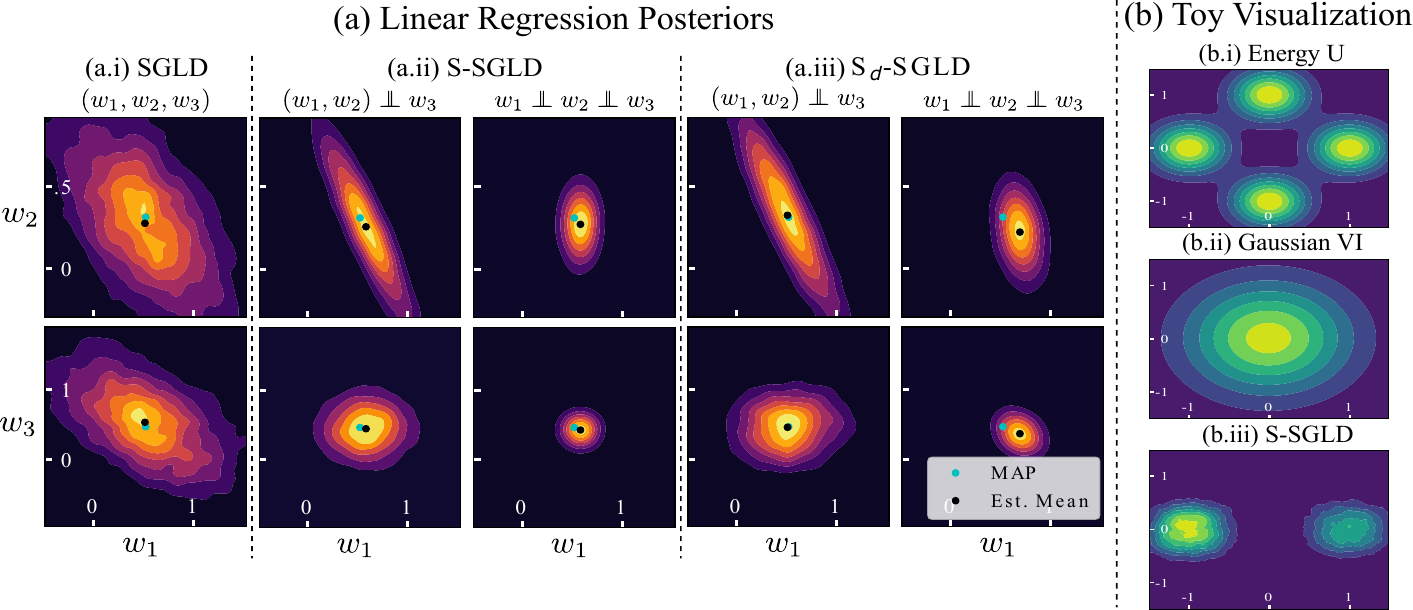}
    \vspace{-0.5em}
    \caption{(a) Sampled posteriors between weights $w_1$ \& $w_2$ (top row) and $w_1$ \& $w_3$ (bottom row) for a three-dimensional linear regression model. From left to right, (a.i) the first column has posterior distributions sampled with SGLD, (a.ii) the second and third columns are sampled from S-SGLD, and (a.iii) the fourth and fifth are sampled from S$_d$-SGLD. The imposed partial parameter factorizations are shown above the plots. (b) An artificial multivariate "posterior"  (b.i) was used as a target for both parametric Gaussian VI (b.ii) and fully factorized S-SGLD (b.iii). S-SGLD captures multiple modes. }
    \label{fig:lin_reg}
\end{figure*}

\paragraph{Visualization in 2D.} An example of S-SGMCMC can be seen in \cref{fig:lin_reg}(a), which features the approximate posterior distributions of a trivariate linear regression model with various independence structures imposed with S-SGLD: (a.i) joint dependence between $w_1, w_2,$ and $w_3$; (a.ii-left) dependence between $w_1$ and $w_2$ but independence between $w_3$ and the other coefficients; (a.ii-right) fully factorized. The bivariate posterior distributions respects the imposed independence structure. Interestingly, the variance shrinks as we induce these factorizations which is a commonly seen artifact when using VI due to the reverse KL divergence. 

\cref{fig:lin_reg}(b) also contrasts the results of S-SGLD (b.iii) and parametric Gaussian VI (b.ii) when employed on a multi-modal posterior distribution (b.i). We can see that in this setting, using S-SGLD the independence structure between variables is maintained while still being able to capture multiple modes---unlike parametric Gaussian VI, making S-SGLD strictly more expressive than parametric approaches.

\paragraph{Correctness of the Scheme.} Assuming that the  chain defined by \cref{eq:var_sum,eq:s_sgld} has reached \emph{some} steady-state $q(\theta_1, …, \theta_M)$ and assuming that all discretization errors can be neglected (i.e., the step size is vanishingly small), we can prove that $q(\theta_1, …, \theta_M)$ must be the Boltzmann distribution from \cref{thm:kl_min} and hence \emph{the optimal non-parametric variational distribution for our given factorization constraint} according to \cref{thm:kl_min}. Note that in practice, an MCMC algorithm reaches a steady-state only asymptotically and not in finite time. In this sense, our arguments should be understood as asymptotic statements.
 
By construction of the algorithm and for a given coordinate $\theta_i$, we pick samples 
from the product of empirical marginals.  
Because we assumed the joint is stationary, all marginals are also stationary. Therefore, the Langevin chain decouples across dimensions since \cref{eq:var_sum} becomes a stationary expectation. Hence, it is sufficient to focus on $\theta_i$ in isolation and prove that its stationary distribution is (up to a normalization constant) given by $\exp\{-U^{(S)}_i (\theta_i)\}$. 

However, the last piece is assured because our scheme results in an unbiased estimate of $U^{(S)}(\theta_i)$, namely $\hat{U}^{(S)}(\theta_i; \tilde{\mathcal{D}})$ in \cref{eq:var_sum}. For Langevin dynamics or any other SGMCMC scheme to converge, this condition is sufficient (assuming negligible discretization errors). The chain of parameter group $\theta_i$ is then guaranteed to converge to  $\exp\{-U^{(S)} (\theta_i)\}$, and the chain of all parameters converges to $\prod_{i=1}^M \exp\{-U^{(S)} (\theta_i)\} = \exp\{- \sum_{i=1}^M U^{(S)} (\theta_i)\}$ (again, dropping normalization constants). The stationary distribution is therefore the one given in \cref{thm:kl_min}.

\section{Structured Dropout SGMCMC}
\label{sec:method_dropout}
While S-SGMCMC can successfully break dependencies between parameter groups, it does suffer computationally due to each parameter update scaling linearly with respect to $M$. This means that for a single new sample of $\theta$, the model's forward pass needs to be computed $M$ different times on the same batch of data $\tilde{\mathcal{D}}$. This can quickly become prohibitively expensive for deep models when $M$ is large. 

Ideally, we would prefer a method that both closely resembles the S-SGMCMC procedure and scales independently from the partitioning scheme. This section presents such a method that achieves this, which we call \emph{structured dropout} SGMCMC (S$_d$-SGMCMC), as well as an informal motivation and derivation of the method. More formal details and a theorem proving both SGMCMC and S-SGMCMC are limiting cases for S$_d$-SGMCMC are given in the Appendix.

The main motivation for this technique can be seen by recognizing that the composition $\{\theta^{(t)}_i, \tilde{\theta}^{(t)}_{\lnot i}\}$ from \cref{eq:var_sum} can be rewritten as a sum of masked values $r\theta^{(t)} + (1-r)\tilde{\theta}^{(t)}$ where $\tilde{\theta}^{(t)}\sim q^{(t)}$ and $r_{j}=\ind(i=j)$ for $i=1,\dots,M$. We can decouple the computational scaling from the number of parameter groups $M$ by replacing the $M$ deterministic masks $r$ with $K$ stochastically sampled masks $\tilde r$.\footnote{$K$ is a hyperparameter that is chosen independent of $M$; however, both $M$ and the distribution of $\tilde r$ largely influence how small $K$ can be due to how they affect the variance of the gradient of the associated posterior energy function.} Doing so results in a slightly different energy function and minibatch loss to optimize:
\begin{multline}
\hat{U}^{(S_d)}(\theta^{(t)}; \tilde{\mathcal{D}}) \approx\\ \frac{M}{K\mathbb{E}\left[\sum_{i=1}^M r_i\right]}\sum_{k=1}^K \hat{U}(\tilde{r}^{(t,k)}\theta^{(t)} + (1-\tilde{r}^{(t,k)})\tilde{\theta}^{(t,k)}; \tilde{\mathcal{D}}) \label{eq:dropout_scale}
\end{multline}
where $\tilde{r}^{(t,k)}$ is the $k^\text{th}$ sample of $\tilde{r}$ for timestep $t$. A formal justification for \cref{eq:dropout_scale} can be found in the Appendix. These energy approximations lead to the following update step for structured dropout variant of SGLD (S$_d$-SGLD):
\begin{equation}
\begin{aligned}
\theta^{(t+1)}  = \theta^{(t)} -\frac{\epsilon_t}{2} \nabla_\theta \hat{U}^{(S_d)}(\theta; \tilde{\mathcal{D}}) + \xi_t, 
\label{eq:sd_sgld}
\end{aligned}
\end{equation}
where $\xi_t \sim \mathcal{N}(0,\epsilon_t I)$.
The corresponding update rules for the structured dropout variants for pSGLD (S$_d$-pSGLD) and SGHMC (S$_d$-SGHMC) are defined in the Appendix. 
The exact procedure for generating samples of the approximate posterior $\hat{q}^{(t)}$ using structured dropout SGMCMC (S$_d$-SGMCMC) can also be found in the Appendix. 

An example of this method (specifically S$_d$-SGLD with $\tilde{r}_i \overset{iid}{\sim} \text{Bernoulli}(0.5)$ and $K=4$) used on a linear regression model can be seen in \cref{fig:lin_reg}(c). Of note, we can see that the dropout variant largely respects the independence structure imposed, but maybe not as strictly as the exact S-SGLD method seen in \cref{fig:lin_reg}(b). Additionally, the posterior variance also seems to have shrunk similarly to S-SGLD when compared against SGLD.

\paragraph{Dropout Analogy.} Should $\tilde{r}_i \overset{iid}{\sim} \text{Bernoulli}(\rho)$, alongside a structure that factorizes by activation components, then the method starts to resemble dropout with rate $\rho$ \citep{srivastava2014dropout}. The main difference being that instead of replacing a parameter value with $0$ it is replaced with a sample from the approximate posterior distribution at time $t$: $\hat{q}^{(t)}$. While a Bernoulli distribution for $\tilde{r}$ is a natural choice, there are other distributions that can be chosen as well. For instance, $\tilde{r}_i \overset{iid}{\sim} \mathcal{N}(0, 1)$ or $\tilde{r}_i \overset{iid}{\sim} \text{Beta}(\alpha, \beta)$ are both viable and can be seen as analogous to Gaussian and Beta-dropout respectively \citep{srivastava2014dropout, liu2019beta}. Our experiments primarily use Bernoulli distributions.
\section{Experiments}
\label{sec:experiments}

\paragraph{Overview} In this section we evaluate our proposed approach on various models and datasets.
\cref{sec:exp-medium} investigates the impact of the variational approximation on the algorithms' mixing and autocorrelation times using a fully-connected network architecture on MNIST \citep{lecun2010mnist}. \cref{sec:exp-large} studies our methods with ResNet-20 \citep{he2016deep} on CIFAR-10 \citep{krizhevsky2009learning}, SVHN \citep{netzer2011reading}, and Fashion MNIST \citep{xiao2017fashion} and compares them for their convergence speed and final accuracy. Our experiments reveal that the chains in our proposed methods converge faster than SGMCMC (and in most cases even parametric VI) while achieving comparable or higher accuracies.  
 Lastly, \cref{sec:ablation} investigates the effects, or lack thereof, that the various partitioning schemes have when utilizing S-SGLD with feed forward neural networks for regression tasks. We also conducted additional experiments, such as hyperparameter optimization, which can be found in the Appendix \cref{sec:hyperparameter_optimization}.

\paragraph{Metrics} 
The primary predictive metric of interest used to evaluate our proposal is classification accuracy. We take the average of an ensemble of 100 models. The weights of every model are sampled from the approximate posterior of a given method. We report the average of individual accuracies produced from each sampled model. 
Additionally, we also monitor the mixing time of the chains of our methods with both integrated autocorrelation time (IAC) \citep{sokal1997monte, goodman2010ensemble} and effective sample size (ESS) \citep{geyer1992practical}.
IAC measures the correlation between samples in a chain and, in turn, describe the inefficiency of a MCMC algorithm. It is an estimate of the number of iterations, on average, for an independent sample to be drawn, given a Markov chain. 
ESS measures how many independent samples would be equivalent to a chain of correlated samples.
Please refer to \cref{sec:exp_details} for more details on these metrics, as well as other experimental setup and implementation specifics.
Higher ESS and lower IAC values can be interpreted as a model having better mixing behavior. 

\subsection{Dropout Rate \& Group Size Investigation}
\label{sec:exp-medium}

This set of experiments aims to study the effects of the number of independent parameter groups (or the amount of allowed posterior correlations) on accuracy and mixing time when using our proposed methods. We compare pSGLD, S-pSGLD, and S$_d$-pSGLD with a $\text{Bernoulli}(\rho)$ masking distribution with dropout rates $\rho\in\{0.1,0.3,0.5\}$ on a fully-connected neural network with 2 hidden layers, with 50 hidden units each, trained and evaluated with MNIST using the standard train and test split.  
The model has 42,200 parameters in total. For S-pSGLD and S$_d$-pSGLD, these parameters are evenly distributed into $M$ groups where $M$ ranges from 4 to 42,200. 
Accuracy, IAC, and ESS are reported in \cref{fig:groups_study} using 100,000 posterior samples after a 10,000 burn in period. More details on the implementation of the model regarding training and evaluation can be found in the Appendix.

As shown in \cref{fig:groups_study}(a), for S-pSGLD we observe that as we increase the number of groups the accuracy drops dramatically whereas S${_d}$-pSGLD's accuracy improves slightly and then remains fairly stable. 
In the best case, S${_d}$-pSGLD achieves an accuracy of 96.3\% with 32 groups and dropout rate of 0.5 which outperforms pSGLD with accuracy of 94.2\%. 
We speculate that the dropout-like behavior is beneficial for regularizing the model (much like normal dropout), improving accuracy across all dropout rates. Similarly, a single sample used for the Monte Carlo estimate in S-SGMCMC may not be enough as the number of groups $M$ increases; however, increasing the number of samples in this scenario is infeasible due to S-SGMCMC scaling as $\mathcal{O}(M)$.

\cref{fig:histogram_mnist} depicts a histogram that compares the IAC values of all dimensions on MNIST between pSGLD and S-pSGLD (with 8192 parameter groups). The IAC values of S-pSGLD are concentrated on the lower end than the pSGLD, which are concentrated on the higher end, further proving the faster mixing of our proposed method.

\cref{fig:groups_study}(b-c) portrays the comparison between number of groups and mixing time metrics IAC and ESS. 
As the number of groups gradually increases, we note that S-pSGLD mixes faster, as does S$_d$-pSGLD to lesser and lesser degrees as $\rho$ increases.
This behavior is to be expected due to Theorem \ref{thm:limit_case}, with S$_d$-pSGLD exhibiting mixing times more similar to pSGLD when $\rho=0.5$ and more similar to S-pSGLD when $\rho=0.1$. 

\begin{figure}
	\centering 
	\includegraphics[width=0.95\linewidth]{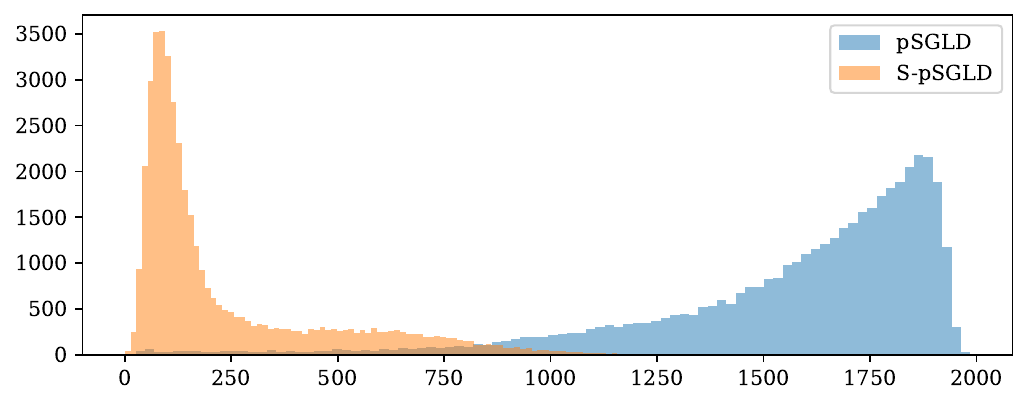}
    \vspace{-1em}
	\caption{Histogram of the IAC values between pSGLD and S-pSGLD (with 8192 parameter groups) on MNIST dataset of each individual parameter. Parameters exhibit much lower IAC values on average when using S-pSGLD.}
	\label{fig:histogram_mnist}
\end{figure}

\begin{figure*}
	\centering 
	\includegraphics[width=0.95\textwidth,height=0.28\textwidth]{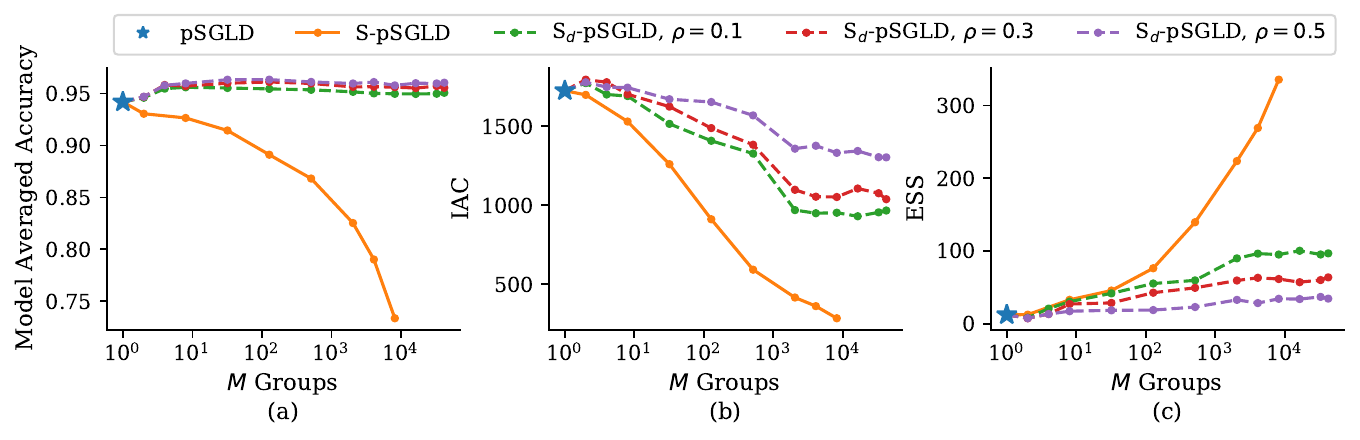}
    \vspace{-1.5em}
	\caption{
	Investigating the effect of the variational approximation's number of parameter groups $M$, where $M=|\theta|$ models a fully-factorized posterior, and $M=1$ captures the full posterior. Experiments were carried out on MNIST with pSGLD as a sampler (see Section~\ref{sec:exp-medium}). We show both non-dropout (S-pSGLD) and dropout versions S${_d}$-pSGLD) of our approach, where we varied the dropout rate $\rho$. For reference, we also include the pSGLD result (at $M=1$). While Figure (a) studies accuracy, Figures (b) and (c) show IAC and ESS, respectively. Our findings are that mixing increases monotonically with $M$ (see Figures (b) and (c)) while accuracy is does not show a strong dependence on $M$ (see Figure (a)). 
    }
	\label{fig:groups_study}
\end{figure*}

\subsection{Systematic Comparison on Real-World Data}
\label{sec:exp-large}

The goal of these experiments is to test the proposed methodology on larger-scale datasets which mimic real-world data: CIFAR-10, SVHN, and FMNIST. We evaluate our methods on performance accuracy and convergence speed. We employ ResNet-20 for SVHN and FMNIST without any data augmentation to assess our methods. For CIFAR10, we employ the same data augmentation process as proposed by \citet{Cubuk_2019_CVPR}. We evaluate the precision of the methods on accuracy over time and their overall mixing time on IAC and ESS with two base algorithms: pSGLD and SGHMC. We limited our scope to models with either fully joint posteriors or fully factorized for efficiency purposes. For the latter, we employed S$_d$-SGMCMC methods with a Bernoulli$(\rho)$ masking distribution, as S-SGMCMC would not be feasible with a number of parameter groups $M$ bigger than 8. We also compare these methods to Gaussian Mean-Field VI (MFVI), VI with inducing weights (MFVI-I), and Deep Ensembles with inducing weights (Ensemble-I) \citep{ritter2021sparse}. The inducing weights are auxiliary variables which have significantly lower dimensions than the weights space and augment that space, making the approach suitable for deep learning architectures. Ensemble-I is \emph{not} a true Bayesian method, but is used in similar settings so it is still included for comparison.


In \cref{cifar10_svhn_acc} we observe how quickly the proposed methods and the baseline SGMCMC methods approach their optimum accuracy over the course of training. In CIFAR10 (\cref{cifar10_svhn_acc}(a)), S$_{d}$-SGMCMC appears to achieve optimal accuracy values much faster than both SGMCMC and VI methods and with both base sampling schemes pSGLD and SGHMC. For SVHN (\cref{cifar10_svhn_acc}(b)), we observe that S$_{d}$-SGMCMC reaches its optimum faster than MFVI-I, SGMCMC and Ensemble-I. Last but not least in FMNIST (\cref{cifar10_svhn_acc}(c)), S$_{d}$-SGMCMC is faster than SGMCMC but slower than the VI methods. In some cases, S$_{d}$-SGMCMC achieves better accuracy values than the baseline and the VI methods, as can be seen in the Appendix. Results on mixing times metrics IAC and ESS can be found in the Appendix, where S$_{d}$-SGMCMC outperforms SGMCMC on all three datasets.

\begin{figure*}
	\centering 
    \includegraphics[width=0.99\textwidth]{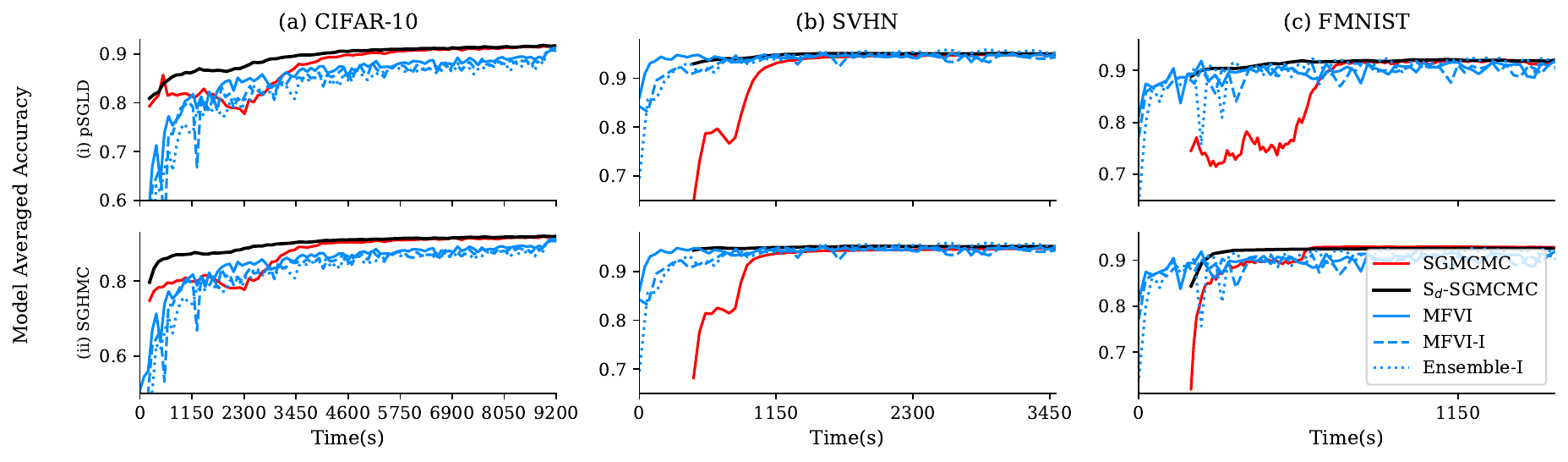}
    \vspace{-1em}
	\caption{Test accuracy as function of training time. We compare  $S_d$-SGMCMC on (a) CIFAR-10, (b) SVHN and (c) FMNIST using ResNet-20 with (i) pSGLD and (ii) SGHMC sampling algorithms and their proposed variational variants for model averaged accuracy. We also compared to MFVI, and MFVI-I and Ensemble-I \citep{ritter2021sparse}.
	SGMCMC results are offset to account for time spent computing \emph{maximum a posteriori} estimates to initialize sampling procedures with.
	Grid search was used to determine optimal hyperparameters for each method. More details can be found in the Appendix.}
	\label{cifar10_svhn_acc}
\end{figure*}

\subsection{Exploring Partitioning Schemes}
\label{sec:ablation}

This part of the study aims to explore the capabilities of the proposed methodology further. Here we explore different parameter partitioning schemes on regression datasets.

Here we present the results with different partitions on various regression datasets. We used 7 different datasets: the wine quality dataset \citep{cortez2009modeling}, the Boston housing dataset \citep{harrison1978hedonic}, the obesity levels dataset \citep{palechor2019dataset}, the Seoul bike-sharing dataset \citep{E2020353,bike_sharing2020}, the concrete compressive strength dataset \citep{yeh1998modeling}, and the airfoil self-noise dataset \citep{brooks1989airfoil}. For the evaluation we chose a simple fully connected network with two layers with 50 neurons each, a Gaussian likelihood, and SGLD for sampling. Performance is measured via mean squared error (MSE). Hyperparameter tuning was performed with different learning rates and the final results are the means with the standard deviations of 5 runs with different seeds.
We do not observe any specific systematic trends on the partitions, apart from the fact that in some cases random partitioning performs better. In that way the use of either random partitioning or the fully-factorized partitioning, where every parameter is in a different group appears to be a valid choice \emph{a priori}; especially the latter since we have noted earlier the faster mixing times associated with this partitioning scheme. More details about the partitioning schemes experiments can be found in the Appendix. 


\begin{table*}
 \centering
 \caption{Investigation of different schemes to partition the parameter space into groups. We study a two-layer fully-connected network on six different regression datasets and show the test mean-squared error. No specific partitioning scheme seems strictly superior to any other one. This suggests that a random partitioning scheme works well for Bayesian neural networks. }
  \resizebox{2\columnwidth}{!}{
\begin{tabular}{l c c c c c c}
 \toprule
  Partition Scheme & Wine & Housing & Obesity & Bike & Concrete & Airfoil\\ 
  \midrule
 \addlinespace[0.1cm]
 Random ($M=3$) & 0.0454±0.001 & 0.0233±0.003 & 0.0232±0.005 & 0.0242±0.001 & 0.0226±0.003 & 0.0454±0.001\\
 By Layer & 0.0494±0.001 & 0.0236±0.003 & 0.0274±0.002 & 0.0247±0.001 & 0.0243±0.003 & 0.0494±0.001\\
 By Neuron & 0.0496±0.001 & 0.0233±0.003 & 0.0262±0.003 & 0.0247±0.001 & 0.0238±0.003 & 0.0496±0.001\\
 Fully-Factorized & 0.0478±0.002 & 0.0236±0.002 & 0.0227±0.002 & 0.025±0.001 & 0.0238±0.001 & 0.0478±0.002\\
\bottomrule
\end{tabular}
}
\label{partitions_table}
\end{table*}


\section{Discussion and Conclusions}
\label{sec:conclusions}

In an attempt to hybridize MCMC and VI, we proposed S-SGMCMC: an approach that produces samples from a structured posterior by running SGMCMC on a self-averaged energy function. 
The resulting chain of parameter iterates becomes asymptotically decoupled across user-specified groups of parameters, resulting in faster mixing times.

For better computational efficiency, we proposed S$_d$-SGMCMC: a further generalization of S-SGMCMC inspired by dropout regularization. 
This dropout extension allows interpolating between an SGMCMC algorithm and its corresponding S-SGMCMC method. 

Our experimental results demonstrate that the proposed methods impose structure over posterior distributions, increase mixing speed of the chains, and result in similar or better ensemble accuracies compared to SGMCMC and parametric variational inference. 
We showed that the proposed approach is compatible with different deep learning architectures, such as ResNet-20, and tested it on CIFAR-10, SVHN, and Fashion MNIST. These evaluations have provided strong empirical evidence for the efficacy of our approach in terms of convergence speed and final accuracies.

Despite its proven capabilities, our proposed methodology does come with some limitations.
Namely, for quick access our methods require keeping chains of samples on the GPU, whereas the baseline SGMCMC methods can save samples to disk. This increases the use of GPU memory and further limits the length of the chains for large-scale experiments; however, thinning the chain and reservoir sampling \citep{wenzel2020good} could be employed to help combat this.

Additionally, S-SGMCMC scales poorly with respect to the number of parameter groups. S$_d$-SGMCMC manages to break this dependency; however, it still requires slightly more compute than SGMCMC per sample but is comparable in wall clock time.

\paragraph{Acknowledgements.} 
We thank Eliot Wong-Toi for feedback on our manuscript, as well as valuable discussions with Sebastian Nowozin, Bastiaan Veeling, and Rodolphe Jenatton. We acknowledge support by the National Science Foundation (NSF) under the NSF CAREER Award 2047418; NSF Grants 1928718, 2003237 and 2007719; the NSF Graduate Research Fellowship under grant number DGE-1839285; the Department of Energy, Office of Science under grant DE-SC0022331, as well as gifts from Intel, Disney, and Qualcomm. This material is in part based upon work supported by the Defense Advanced Research Projects Agency (DARPA) under Contract No. HR001120C0021. Any opinions, findings and conclusions or recommendations expressed in this material are those of the author(s) and do not necessarily reflect the views of DARPA.

\FloatBarrier


\bibliography{icml2022}
\bibliographystyle{icml2022}

\appendix
\section{Solution to Structured VI}
\label{proof_proposition1}

\textbf{\cref{thm:kl_min}} The unique solution to the KL minimization problem given in Eq.~\ref{eq:kl} is given by the Boltzmann distribution $q(\theta) \propto  \exp\{-\sum_{i=1}^M U_i^{(S)}(\theta_i)\}$.
\vspace{-1em}
\begin{proof}
We begin with some preliminaries from the main text. Given data $\mathcal{D}=\{(x_i,y_i)\}_{i=1,\dots,N}$, parameters $\theta$, a proper prior distribution $p(\theta)$, and a likelihood $p(\mathcal{D}|\theta) = \prod_{i=1}^N p(y_i|x_i,\theta)$, suppose we are interested in the corresponding posterior distribution $p(\theta|\mathcal{D})\propto p(\mathcal{D}|\theta)p(\theta)$. A convenient representation of the posterior is as a Boltzmann distribution:
\begin{align}
p(\theta|\mathcal{D})  \propto \exp\{-U(\theta)\}, 
\end{align}
where $U(\theta)  = -\sum_{(x,y)\in \mathcal{D}} \log p(y|x,\theta) - \log p(\theta)$. $U$ is typically referred to as the \emph{posterior energy function}. Note that the posterior distribution is typically intractable due to the normalizing constant. 

We also write the equation for KL divergence from the main text:
\begin{align}
J(q(\theta)) 
&= D_\text{KL}(q(\theta) || p(\theta|\mathcal{D}))  \\
& \equiv \mathbb{E}_{\theta\sim q}\left[\log \frac{q(\theta)}{p(\theta|\mathcal{D})}\right] 
\end{align}

We then rewrite Eq.~\ref{eq:kl} as follows: 
\begin{flalign}
    J&(q(\theta)) = \E_{\theta \sim q}\left[\log q(\theta)\right] - \E_{\theta \sim q}\left[\log \prob(\theta, \mathcal{D})\right] + C\\
    =& \E_{\theta_i \sim q_i}\left[\log q_i(\theta_i)\right] + \sum_{i\neq j}\E_{\theta_j \sim q_j}\left[\log q_j(\theta_j)\right] - \notag \\
    &\;\; \int \log p(\theta, \mathcal{D}) q_i(\theta_i)d\theta_i \prod_{i\neq j} q_j(\theta_j)d\theta_j + C 
\end{flalign}
for some $i \in \{1, \dots, M\}$ where $\lnot i := \{1, \dots, M\} \setminus \{i\}$ and $C=\log p(\mathcal{D})$. In order to find the optimal distribution that respects the factorization constraints imposed between parameter groups, we need to minimize this functional over $q$ --- or rather every $q_i$. This is done by taking the functional derivative of $J$ with respect to $q_i$, setting it equal to zero, and solving for $q_i$:
\begin{align}
    \frac{\delta J(q(\theta))}{\delta q_i(\theta_i)} & = \int \log p(\theta, \mathcal{D}) \prod_{i\neq j} q_j(\theta_j)d\theta_j - \notag \\
    & \quad\;\; 1 - \log q_i(\theta_i) := 0 \\ 
    \implies \log q_i(\theta_i) & = \E_{\tilde{\theta}_{\lnot i} \sim q_{\lnot i}}\left[\log p(\theta_i, \tilde{\theta}_{\lnot i}, \mathcal{D})\right] - 1 \\
    \implies q_i(\theta_i) & \propto \exp\left\{\E_{\tilde{\theta}_{\lnot i} \sim q_{\lnot i}}\left[\log p(\theta_i, \tilde{\theta}_{\lnot i}, \mathcal{D})\right]\right\}. 
\end{align}
By defining the energy $U^{(S)}_i(\theta_i) = -\E_{\tilde{\theta}_{\lnot i} \sim q_{\lnot i}}\left[\log p(\theta_i, \tilde{\theta}_{\lnot i}, \mathcal{D})\right]$, we realize that by minimizing the KL-divergence in Eq.~\ref{eq:kl}, the approximate posterior distribution $q=\prod_{i=1}^M q_i$ takes the form of a Boltzmann distribution as in Eq.~\ref{eq:energy_dist} with $U^{(\text{S})}(\theta)=\sum_{i=1}^M U^{(\text{S})}_i(\theta_i)$.

It remains to be shown that the solution is unique. To this end, we refer to the convexity of the KL divergence in function space \citep{cover1991entropy}. This implies that the stationary point of the KL is indeed a global optimum and unique. \qedhere
\end{proof}

\section{Deriving $U^{(S)}$}

With just a slight shift in perspective, it is actually possible to further generalize $U^{(S)}$ (and consequently S-SGMCMC) to produce a broader class of approximate sampling algorithms.
This is done by first noting that $U^{(S)}$ can be represented with a scaled double-expectation:
\begin{align}
& U^{(S)}(\theta) = -\frac{M}{\E_{r\sim p^{(S)}}\left[\sum_{i=1}^M r_i\right]} \times \notag \\
&\quad\;\;E_{r\sim p^{(S)}}E_{\tilde{\theta} \sim q} \left[\log p(r\theta + (1-r)\tilde{\theta}, \mathcal{D})\right] \label{eq:u_exp}
\end{align}
where $p^{(S)}(r) = \text{Cat}(r; M^{-1},\dots,M^{-1})$ and $(r\theta + (1-r)\tilde{\theta})_i$ is equal to $\theta_i$ if $r_i = 1$ and $\tilde{\theta}_i$ otherwise for $i=1,\dots,M$. Note that this is constructed in this manner specifically so that $U^{(S)}$ remains differentiable with respect to $\theta$. Also note that though the denominator appears superfluous as $\E_{r\sim p^{(S)}}[\sum_{i=1}^M r_i] = 1$, it is necessary for certain theoretic properties, as seen in \cref{thm:limit_case}.

By replacing $p^{(S)}$ with a more flexible distribution, we can further generalize and encapsulate different energy functions to sample from. One such choice is $p^{(S_d)}(r;\rho) :\propto \prod_{i=1}^M\text{Bern}(r_i;\rho)\ind(\sum_{i=1}^M r_i > 0)$ with $\rho \in (0,1)$.\footnote{Other choices of distribution that are well justified include any with support over $[0,1]^M$ and with measure 0 over $\{0\}^M$. Exploring the effects these distributions have are an interesting line of future inquiry.} 
Substituting $p^{(S)}$ for $p^{(S_d)}$ in \cref{eq:u_exp} yields a new energy function that we will refer to as $U^{(S_d)}$.
We note that this choice in distribution leads to a dropout-like behavior \citep{nalisnick2019dropout,srivastava2014dropout}, where the composition of model parameters as $r\theta+(1-r)\tilde{\theta}$ leads to each parameter group $\theta_i$ having a probability of approximately $\rho$ to be used in a prediction and a $(1-\rho)$ probability of being replaced by $\tilde{\theta}_i$ from the approximate posterior (in traditional dropout, $\theta_i$ would instead be replaced with 0).
Likewise, we will denote methods that use this energy function for sampling as structured dropout SGMCMC (S$_d$-SGMCMC) with different variants all sharing the same S$_d$ prefix (e.g. S$_d$-SGHMC).

In practice, the double-expectation in $U^{(S_d)}$ is jointly approximated using a Monte Carlo estimate with $K$ samples. This leads to \cref{eq:dropout_scale} in the main paper.
We note that by approximating $U^{(S_d)}$ in this way, computing a gradient no longer scales on the order of $\mathcal{O}(M)$, but rather $\mathcal{O}(K)$. This means that the choice of structure imposed on the posterior distribution remains independent of computing resources. As such, configurations with large amounts of parameter groups are typically only feasible when using S$_d$-SGMCMC as S-SGMCMC would use too much memory and/or compute per sample. 

\section{Limiting Cases for S$_d$-SGMCMC}

\begin{proposition}\label{thm:limit_case}
For a given set of parameters $\theta$ partitioned into $M$ groups, under minor assumptions (i) $U^{(S_d)} \rightarrow U$ as $\rho \rightarrow 1$ and (ii) $U^{(S_d)} \rightarrow U^{(S)}$ as $\rho \rightarrow 0$. Thus, distributions approximated by S$_d$-SGMCMC lie on a continuum with those generated by S-SGMCMC at one extreme and with those from SGMCMC at the other.
\end{proposition}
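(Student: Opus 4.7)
The plan is to handle both limits uniformly by exploiting the fact that the only $\rho$-dependent pieces of $U^{(S_d)}$ are the masking distribution $p^{(S_d)}(r;\rho) \propto \prod_i \mathrm{Bern}(r_i;\rho)\,\ind(|r|>0)$ and the scalar prefactor $M/\E_{p^{(S_d)}}[|r|]$, where I write $|r|:=\sum_{i=1}^M r_i$. For either limit, the recipe is: (a) show $p^{(S_d)}(\cdot;\rho)$ converges weakly to the relevant degenerate distribution on $\{0,1\}^M$; (b) show the prefactor converges to the corresponding target constant; and (c) interchange the $\rho$-limit with the outer $\E_{\tilde\theta\sim q}$-expectation by dominated convergence. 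The normalizer $Z_\rho := 1-(1-\rho)^M$ together with the identity $\E_{p^{(S_d)}}[|r|] = M\rho/Z_\rho$ (which holds because $|r|=0$ precisely on the event being conditioned out) will do most of the bookkeeping, giving the prefactor the clean form $Z_\rho/\rho$.

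For case (i), as $\rho \to 1$, direct inspection gives $\mathrm{Pr}_{p^{(S_d)}}(r=\mathbf{1}) = \rho^M/Z_\rho \to 1$, so all mass concentrates on $r=\mathbf{1}$. On that atom the composition $r\theta+(1-r)\tilde\theta$ reduces to $\theta$ identically in $\tilde\theta$, collapsing the inner expectation. The prefactor $Z_\rho/\rho \to 1$. Assembling these three ingredients and passing to the limit gives $U^{(S_d)}(\theta) \to -\log p(\theta,\mathcal{D}) = U(\theta)$.

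For case (ii), as $\rho \to 0$, a Taylor expansion $Z_\rho = M\rho - \binom{M}{2}\rho^2 + O(\rho^3)$ shows that each singleton $r=e_j$ has conditional probability $\rho(1-\rho)^{M-1}/Z_\rho \to 1/M$, while events with $|r|\geq 2$ carry conditional probability $O(\rho)\to 0$. Thus $p^{(S_d)}(\cdot;\rho) \Rightarrow p^{(S)}=\mathrm{Cat}(M^{-1},\dots,M^{-1})$. The prefactor $Z_\rho/\rho \to M$ by L'H\^opital (or the same expansion). On the singleton $r=e_j$ the composition is $(\theta_j, \tilde\theta_{\lnot j})$, so in the limit the double expectation becomes $\tfrac{1}{M}\sum_{j=1}^M \E_{\tilde\theta\sim q}[\log p(\theta_j,\tilde\theta_{\lnot j},\mathcal{D})] = -\tfrac{1}{M}\sum_j U^{(S)}_j(\theta_j)$, which after multiplication by the prefactor $M$ reproduces $U^{(S)}(\theta)=\sum_j U^{(S)}_j(\theta_j)$ of \cref{eq:US}.

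The main obstacle is step (c): the $r$-expectation is a finite sum over at most $2^M$ atoms and commutes freely with the $\rho$-limit, so the only analytic content sits in the inner $q$-expectation of $\log p(r\theta+(1-r)\tilde\theta,\mathcal{D})$. This is where the ``minor assumptions'' of the statement must be made precise; I would phrase them as a uniform integrability hypothesis---e.g., that $\log p(\,\cdot\,,\mathcal{D})$ is dominated by some $q$-integrable envelope over the finite family of compositions indexed by $r\in\{0,1\}^M$---which is automatic for any standard likelihood paired with a proper prior and bounded data. Modulo that regularity, the remainder is purely combinatorial bookkeeping around $Z_\rho$, and I would write the cancellation between $Z_\rho/\rho$ and the singleton conditional probabilities out explicitly so that the matching of constants at both endpoints is transparent.
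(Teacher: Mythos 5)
Your proposal is correct and follows essentially the same route as the paper's proof: identify the masking distribution and the prefactor $M/\E[\textstyle\sum_i r_i]$ as the only $\rho$-dependent pieces, show the former converges in distribution to $\delta(\{1\}^M)$ (resp.\ $\mathrm{Cat}(M^{-1},\dots,M^{-1})$) and the latter to $1$ (resp.\ $M$), and pass to the limit using the finiteness of the support $\mathcal{R}$. Your explicit identity $\E_{p^{(S_d)}}[\textstyle\sum_i r_i]=M\rho/Z_\rho$ is a slightly tidier piece of bookkeeping than the paper's (which handles the singleton probabilities via l'H\^opital in a separate lemma), but the argument is the same; note also that since the inner $q$-expectations are fixed numbers independent of $\rho$, mere existence of those finitely many expectations already suffices for your step (c), so no uniform-integrability envelope is needed.
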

\vspace{-1em}
\begin{proof}
Assume an arbitrary $\theta$, $\mathcal{D}$, $n\in \mathbb{N}$, and that $\E_{\tilde{\theta} \sim q}\left[\log p(r\theta + (1-r)\tilde{\theta}, \mathcal{D})\right]$ exists for $r \in \mathcal{R}$. As an aside, this proof assumes that $p^{(S_d)}(r;\rho) :\propto \prod_{i=1}^M\text{Bern}(r_i;\rho)\ind(\sum_{i=1}^M r_i > 0)$ with $\rho \in (0,1)$; however, the theorem still holds an arbitrary $p^{(S_d)}$ so long as the mean approaches 1 and variance approaches 0 as $n\rightarrow \infty$.  

(i) Let $r^{(n)} \sim p^{(S_d)}(\rho_n)$ where $\forall_n \rho_n \in (0,1)$ and $\rho_n \rightarrow 1$.
It follows that $r^{(n)} \rightarrow \{1\}^M$ as $n \rightarrow \infty$ in distribution (see \cref{lemma:1} in Supplement). Due to bounded and finite support $\mathcal{R}$, we find the following:
\begin{align}
& U^{(S_d)}(\theta)  = -\frac{M}{\E_{r\sim p^{(S_d)}}[\sum_{i=1}^M r_i]} \times \notag \\
&\quad\;\;\sum_{r\in\mathcal{R}} p^{(S_d)}(r;\rho_n) \E_{\tilde{\theta} \sim q}\left[\log p(r\theta + (1-r)\tilde{\theta}, \mathcal{D})\right] \\
& \rightarrow -\frac{M}{M} \sum_{r\in\mathcal{R}} \ind(\forall_i r_i = 1) \E_{\tilde{\theta} \sim q}\left[\log p(\theta, \mathcal{D})\right] \text{ as } n \rightarrow \infty \\
& = -\log p(\theta, \mathcal{D}) = U(\theta)
\end{align}

(ii) Let $r^{(n)} \sim p^{(S_d)}(\rho_n)$ where $\forall_n \rho_n \in (0,1)$ and $\rho_n \rightarrow 0$.
It follows that $r^{(n)} \rightarrow r \sim \text{Cat}(M^{-1}, \dots, M^{-1})$ as $n \rightarrow \infty$ in distribution (see \cref{lemma:2} in Supplement). Due to bounded and finite support $\mathcal{R}$, we find the following:
\begin{align}
& U^{(S_d)}(\theta) = -\frac{M}{\E_{r\sim p^{(S_d)}}[\sum_{i=1}^M r_i]}\times \notag \\
&\quad\;\; \sum_{r\in\mathcal{R}} p^{(S_d)}(r;\rho_n) \E_{\tilde{\theta} \sim q}\left[\log p(r\theta + (1-r)\tilde{\theta}, \mathcal{D})\right] \\
& \rightarrow -\frac{M}{1} \sum_{r\in\mathcal{R}} \frac{\ind(\sum_{i=1}^M r_i = 1)}{M}\times \notag \\
&\quad\;\;\E_{\tilde{\theta} \sim q}\left[\log p(r\theta+(1-r)\tilde{\theta}, \mathcal{D})\right] \text{ as } n \rightarrow \infty \\
& = -\sum_{i=1}^M \E_{\tilde{\theta} \sim q}[\log p([\theta_i,\tilde{\theta}_{\lnot i}, \mathcal{D})] = U^{(S)}(\theta)
\end{align}
\qedhere
\end{proof}

For both \cref{lemma:1,lemma:2}, let
\begin{align}
p^{(S_d)}(r;\rho) = \frac{\rho^{\sum_{i=1}^M r_i}(1-\rho)^{M-\sum_{i=1}^M r_i}}{1-(1-\rho)^M}\times \notag \\
\ind(\forall_i r_i \in \{0,1\})\ind\left(\sum_{i=1}^M r_i > 0\right)
\end{align}

\begin{lemma}
For $r^{(n)} \sim p^{(S_d)}(\rho_n)$, $\rho_n \in (0,1)$ and $n\in \mathbb{N}$, if $\rho_n \rightarrow 1$ as $n \rightarrow \infty$ then $r^{(n)} \rightarrow r\sim \delta(\{1\}^M)$ in distribution as $n\rightarrow \infty$.
\label{lemma:1}
\end{lemma}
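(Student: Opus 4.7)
The plan is to exploit the fact that $r^{(n)}$ takes values in the finite set $\mathcal{R} = \{0,1\}^M \setminus \{\{0\}^M\}$, so convergence in distribution to a point mass at $\{1\}^M$ reduces to showing $P(r^{(n)} = \{1\}^M) \to 1$ (and hence the probability on every other atom in $\mathcal{R}$ tends to $0$). This is in fact even stronger than convergence in distribution (it is convergence in total variation), but it is what we need since the limit is a Dirac.

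First I would evaluate the pmf at the single point $r = \{1\}^M$. Plugging in $\sum_{i=1}^M r_i = M$ into the definition of $p^{(S_d)}$ collapses the Bernoulli product to
\begin{equation*}
p^{(S_d)}(\{1\}^M; \rho_n) \;=\; \frac{\rho_n^{M}\,(1-\rho_n)^{0}}{1-(1-\rho_n)^{M}} \;=\; \frac{\rho_n^{M}}{1-(1-\rho_n)^{M}},
\end{equation*}
since both indicator factors evaluate to $1$ (all entries are in $\{0,1\}$ and the sum is strictly positive).

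Next I would take $n \to \infty$. Because $\rho_n \to 1$, the numerator $\rho_n^M \to 1$ by continuity of the power function, while $(1-\rho_n)^M \to 0$, so the denominator $1 - (1-\rho_n)^M \to 1$. Therefore $p^{(S_d)}(\{1\}^M; \rho_n) \to 1$. Since $\mathcal{R}$ is finite and the total mass is $1$, this forces $p^{(S_d)}(r; \rho_n) \to 0$ for every $r \in \mathcal{R} \setminus \{\{1\}^M\}$, which is exactly convergence in distribution to $\delta(\{1\}^M)$ (equivalently, $P(r^{(n)} \in A) \to \ind(\{1\}^M \in A)$ for every subset $A \subseteq \mathcal{R}$, and in particular for every continuity set of the limit law).

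I do not expect any real obstacle here: the finiteness of the support makes the problem purely a pmf-limit calculation, bypassing the usual subtleties of weak convergence (portmanteau, characteristic functions, etc.). The only minor care needed is to note that the conditioning denominator $1 - (1-\rho_n)^M$ stays bounded away from $0$ along the sequence (it actually tends to $1$), so no division-by-zero issue arises; this is immediate from $\rho_n \in (0,1)$.
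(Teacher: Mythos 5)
Your proof is correct and takes essentially the same route as the paper: evaluate the pmf at $r=\{1\}^M$, observe that $\rho_n^{M}/(1-(1-\rho_n)^{M})\rightarrow 1$, and conclude convergence to the point mass. The only difference is that you spell out why mass $1$ on a single atom of a finite support forces the other atoms to $0$ (and note the total-variation strengthening), which the paper leaves implicit.
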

\begin{proof}
\begin{align}
p^{(S_d)}(r=\{1\}^M;\rho_n) & = \frac{\rho_n^{M}(1-\rho_n)^{0}}{1-(1-\rho_n)^M} \\
& \rightarrow 1 \text{ as } n \rightarrow \infty \\
\implies r^{(n)} & \rightarrow \delta(\{1\}^M) \text{ in distribution.}
\end{align}
\qedhere
\end{proof}
\begin{lemma}
For $r^{(n)} \sim p^{(S_d)}(\rho_n)$, $\rho_n \in (0,1)$ and $n\in \mathbb{N}$, if $\rho_n \rightarrow 0$ as $n \rightarrow \infty$ then $r^{(n)} \rightarrow r\sim \text{Cat}(M^{-1},\dots,M^{-1})$ in distribution as $n\rightarrow \infty$.
\label{lemma:2}
\end{lemma}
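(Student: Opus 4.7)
The plan is to prove convergence in distribution by establishing pointwise convergence of the probability mass function on the (finite) support $\{0,1\}^M \setminus \{\mathbf{0}\}$, since for discrete random variables on a fixed finite set this is equivalent to convergence in distribution.

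First I would expand the normalizing denominator. Using the binomial theorem,
\begin{align}
1 - (1-\rho_n)^M = M\rho_n - \binom{M}{2}\rho_n^2 + O(\rho_n^3) = M\rho_n\bigl(1 + O(\rho_n)\bigr),
\end{align}
so that $1/(1-(1-\rho_n)^M) = (M\rho_n)^{-1}\bigl(1+O(\rho_n)\bigr)$ as $\rho_n \to 0$.

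Next I would split the support into two cases according to $s(r) := \sum_{i=1}^M r_i$. For a unit vector $r = e_i$ (i.e.\ $s(r)=1$), the PMF becomes
\begin{align}
p^{(S_d)}(e_i;\rho_n) = \frac{\rho_n(1-\rho_n)^{M-1}}{1-(1-\rho_n)^M} = \frac{\rho_n\bigl(1+O(\rho_n)\bigr)}{M\rho_n\bigl(1+O(\rho_n)\bigr)} \longrightarrow \frac{1}{M}
\end{align}
as $n\to\infty$. For any $r$ with $s(r) = k \geq 2$, the numerator is $\rho_n^k(1-\rho_n)^{M-k} = O(\rho_n^k)$ while the denominator is of order $\rho_n$, so
\begin{align}
p^{(S_d)}(r;\rho_n) = O(\rho_n^{k-1}) \longrightarrow 0.
\end{align}

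Finally I would conclude: the limit PMF assigns mass $1/M$ to each of the $M$ unit vectors $e_1,\dots,e_M$ and mass $0$ to every other point in the support, which is exactly $\mathrm{Cat}(M^{-1},\dots,M^{-1})$ viewed as a distribution on $\{0,1\}^M$ concentrated on the unit vectors. Pointwise convergence on a finite support yields convergence in distribution, completing the proof. The only step requiring any care is the denominator expansion --- this is the main place to be precise, though it is routine --- after which the split between $s(r)=1$ and $s(r)\geq 2$ makes the result immediate.
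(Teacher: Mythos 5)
Your proof is correct and follows essentially the same route as the paper's: establish pointwise convergence of the PMF on the finite support, with the unit-vector probabilities tending to $1/M$. The only differences are cosmetic --- you compute the limit via a binomial expansion of the denominator where the paper uses l'H\^{o}pital's rule, and you explicitly verify that the mass on configurations with $\sum_i r_i \geq 2$ vanishes, whereas the paper infers this from the fact that the limiting probabilities already sum to $1$.
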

\begin{proof}
Let $i\in\{1,\dots,M\}$.
\begin{align}
p^{(S_d)}&(r_i=1,r_{\lnot i}=0;\rho_n)  = \frac{\rho_n(1-\rho_n)^{M-1}}{1-(1-\rho_n)^M} \\
& \stackrel{H}{=} \frac{(1-\rho_n)^{M-1} + \rho_n(M-1)(1-\rho_n)^{M-2}}{M(1-\rho_n)^{M-1}} \\
& \rightarrow \frac{1}{M} \text{ as } n \rightarrow \infty
\end{align}
where $\stackrel{H}{=}$ indicates utilizing $\text{l'H\^{o}spital's Rule}$.
Since the resulting probabilities sum to 1, this implies that $r^{(n)} \rightarrow r \sim \text{Cat}(M^{-1},\dots,M^{-1})$ in distribution as $n \rightarrow \infty$.
\qedhere
\end{proof}

\section{Deriving $U^{(S_d)}$}
To derive $U^{(S_d)}$, we must first start with a shift in perspective on how $U^{(S)}$ is represented. We will rewrite the function in the following way:
\begin{align}
    & U^{(S)}(\theta) = -\sum_{i=1}^M \E_{\theta_{\lnot i} \sim q_{\lnot i}}[\log p([\theta_i, \theta_{\lnot i}], \mathcal{D})] \\
    & = -\frac{M}{\E_{r\sim p^{(S)}}[\sum_{i=1}^M r_i]} \times \notag \\
    & \quad\quad \E_{r\sim p^{(S)}}\E_{\tilde{\theta} \sim q}\left[\log p(r\theta + (1-r)\tilde{\theta}, \mathcal{D})\right]
\end{align}
where $p^{(S)}$ is a $M$-dimensional categorical distribution with uniform weights $M^{-1}$ and $p(r\theta + (1-r)\tilde{\theta}, \mathcal{D})$ is the joint probability of parameters taking values of $r\theta + (1-r)\tilde{\theta}$ and data $\mathcal{D}$.\footnote{$r\theta + (1-r)\tilde{\theta}$ is a slight abuse of notation that is meant to represent masking out $\theta_i$ when $r_i=0$ and masking out $\tilde{\theta}_i$ when $r_i=1$.}

We note that changing the distribution of $r$ leads to different energy functions to sample from. One such choice is to have $p^{(S_d)}(r; \rho) \propto \rho^{\sum_{i=1}^M r_i}(1-\rho)^{M-\sum_{i=1}^M r_i}\ind(\forall_i r_i \in \{0,1\})\ind(\sum_{i=1}^M r_i > 0)$ for $\rho\in(0,1)$. Note that this is identical to $r_i \stackrel{iid}{\sim} \text{Bernoulli}(\rho)$ conditional to $\sum_{i=1}^M r_i > 0$. Let the support of $p^{(S_d)}$ be denoted as $\mathcal{R}=\{0,1\}^M \setminus \{0\}^M$. This leads to the following energy function:
\begin{align}\label{eq:drop_energy_func}
& U^{(S_d)}(\theta) = -\frac{M}{\E_{r\sim p^{(S_d)}}[\sum_{i=1}^M r_i]} \times \notag \\
& \quad\quad \E_{r\sim p^{(S_d)}}\E_{\tilde{\theta} \sim q}\left[\log p(r\theta + (1-r)\tilde{\theta}, \mathcal{D})\right].  
\end{align}
In practice, a few approximations are made to compute the corresponding $U^{(S_d)}$. Firstly, we approximate $p^{(S_d)}$ with an $M$-dimensional $\text{Bernoulli}(\rho)$ distribution as the difference is minute when $M\rho$ is large. Secondly, the outer expectation in \cref{eq:drop_energy_func} is approximated with a Monte Carlo estimate of $K$ samples. The inner expectation is also approximated with a Monte Carlo estimate using the latest approximate posterior $\hat{q}^{(t)}$. However, just like for S-SGMCMC, only a single sample is used. This further leads to:
\begin{align}
& U^{(S_d)}(\theta^{(t)}; \tilde{\mathcal{D}}) \notag \\
& = -\frac{1}{K\rho} \sum_{k=1}^K U(r^{(t,k)}\theta^{(t)} + (1-r^{(t,k)})\tilde{\theta}^{(t,k)}; \tilde{\mathcal{D}})
\end{align}

\section{Algorithm for S-SGMCMC and S$_d$-SGMCMC} \label{sec:algo}
The procedures for S-SGMCMC and S$_d$-SGMCMC can be seen in Algorithms \ref{alg:vsgmcmc} and \ref{alg:vdsgmcmc}. 

\begin{algorithm}[H]
  \SetAlgoNoLine
  \DontPrintSemicolon
  \KwIn{Initial sample $\theta^{(0)}$; parameter partitions $\theta_1,\dots,\theta_M$; 
  step sizes $\{\epsilon_t\}_{t=0,\dots,T-1}$.}
  \KwOut{$\hat{q}^{(T)}(\theta) := \{\theta^{(t)}\}_{t=1,\dots,T}$}
  \For{$t=0$ \KwTo $T-1$}{
    Sample minibatch $\tilde{\mathcal{D}}^{(t)} \subset \mathcal{D}$\;\\
    \For{$i=1 \text{ to } M$}{
        $\text{Sample } \tilde{\theta}_{\lnot i}^{(t)} \sim \hat{q}_{\lnot i}^{(t)}$\;\\
        $\!\!\!\!\!\!\!\!\!\!\!\!\!\!\!\!\!\!\hat{U}_i^{(S, t)} = \hat{U}([\theta^{(t)}_i, \tilde{\theta}^{(t)}_{\lnot i}]; \tilde{\mathcal{D}}^{(t)})$\; 
    }
    $\nabla_\theta \hat{U}^{(S, t)} = \sum_{i=1}^M \nabla_\theta \hat{U}_i^{(S, t)}$\; \newline
    $\theta^{(t+1)} =$ \texttt{SGMCMC\_step}$(\theta^{(t)}, \nabla_\theta \hat{U}^{(S, t)},\epsilon_t)$
  }
  \Return $\hat{q}^{(T)}(\theta)$
  \caption{S-SGMCMC}
  \label{alg:vsgmcmc}
\end{algorithm}

\begin{algorithm}[H]
  \SetAlgoNoLine
  \DontPrintSemicolon
  \KwIn{Initial sample $\theta^{(0)}$; parameter partitions $\theta_1,\dots,\theta_M$; 
  step sizes $\{\epsilon_t\}_{t=0,\dots,T-1}$; masking distribution $p(r)$.}
  \KwOut{$\hat{q}^{(T)}(\theta) := \{\theta^{(t)}\}_{t=1,\dots,T}$}
    \For{$t = 0$ to $T-1$}{ 
    Sample minibatch $\tilde{\mathcal{D}}^{(t)} \subset \mathcal{D}$\;\\
    \For{$k=1$ to $K$}{ 
        Sample masks $r^{(t,k)}_{1}, \dots, r^{(t,k)}_{M} \sim p(r)$\;\\
        Sample $\tilde{\theta}^{(t,k)} \sim \hat{q}^{(t)}$\;\\
        $\theta^{(t,k)} = [r^{(t,k)}_{i}\theta^{(t)}_i + (1-r^{(t,k)}_{i})\tilde{\theta}_i^{(t,k)}]_{i=1,\dots,M}$\;\\
        $\hat{U}_k^{(S_d, t)} = \hat{U}(\theta^{(t,k)}; \tilde{\mathcal{D}}^{(t)})$ \; 
    }
    ${\nabla}_\theta \hat{U}^{(S_d, t)} = \frac{M}{K\mathbb{E}_{r\sim p(r)}\left[\sum_{i=1}^M r_i\right]} \sum_{k=1}^K \nabla_\theta \hat{U}_k^{(S_d, t)}$ \;\\
    $\theta^{(t+1)} = \texttt{SGMCMC\_step}(\theta^{(t)}, {\nabla}_\theta \hat{U}^{(S_d, t)},\epsilon_t)$\;
}
  \Return $\hat{q}^{(T)}(\theta)$
  \caption{S$_d$-SGMCMC}
  \label{alg:vdsgmcmc}
\end{algorithm}


\section{SGMCMC Update Rules}

The update rules for SGLD, pSGLD, and SGHMC are defined as follows:
\begin{align}
\text{SGLD} \;\; \theta^{(t+1)} & = \theta^{(t)} -\frac{\epsilon_t}{2}\nabla_\theta \hat U(\theta^{(t)}) + \mathcal{N}(0,\epsilon_t I) \label{eq:sup_sgld}\\
\text{pSGLD} \;\; \theta^{(t+1)} & = \theta^{(t)} -\frac{\epsilon_t}{2}\Bigg[R(\theta^{(t)})\nabla_\theta \hat U(\theta^{(t)}) + \notag \\
& \;\;\; \sum_\theta \nabla_\theta R(\theta^{(t)})\Bigg] + \mathcal{N}(0, \epsilon_t R(\theta^{(t)})) \label{eq:sup_psgld}\\
\text{SGHMC} \;\; \theta^{(t+1)} & =\theta^{(t)} + \epsilon_t M^{-1}m^{(t+1)} \\
m^{(t+1)} & = (1-\gamma\epsilon_t M^{-1})m^{(t)} - \epsilon_t  \nabla_\theta \hat U(\theta^{(t)}) + \notag \\ & \;\;\; \mathcal{N}(0, 2\gamma-\epsilon_t\hat V(\theta^{(t)})) \label{eq:sup_sghmc}
\end{align}
where $\epsilon_t$ is the step size at time step $t$, $R(\cdot)$ and $M$ are preconditioners, $\gamma \geq 0$ is a friction term, and $\hat V(\cdot)$ is an estimate of the covariance induced by the stochastic gradient.\footnote{Note that we abuse notation in Eqs.~\ref{eq:sup_sgld}-\ref{eq:sup_sghmc} where the addition of $\mathcal{N}(\mu,\Sigma)$ denotes the addition of a normally distributed random variable with mean $\mu$ and covariance $\Sigma$.} 

The update rules for the S-SGMCMC variants are similarly defined as Eqs.~\ref{eq:sup_sgld}-\ref{eq:sup_sghmc} but all instances of $\hat{U}(\theta^{(t)})$ are replaced with $\hat U^{(S)}(\theta^{(t)})$. Likewise, replacing with $\hat U^{(S_d)}(\theta^{(t)})$ yields the S$_d$-SGMCMC variants.
\section{Ablation Study}

This subsection aims to further explore the capabilities of the proposed methodology. More specifically, we experiment with various parameter partitions.

\paragraph{Parameter Partitions.} We tested our proposal with four partitioning schemes on a 2 layer with 50 neurons fully connected network on a regression task. The partitioning schemes that we used are the following: (a) the parameters are split into 3 groups randomly, (b) the parameters are split by layer(3 layers, 1 input and 2 hidden), (c) by activating neurons inside the layers and (d) every parameter belongs in each own group. We used 7 different datasets: the wine quality datset \citep{cortez2009modeling}, the Boston housing dataset \citep{harrison1978hedonic}, the obesity levels dataset \citep{palechor2019dataset}, the Seoul bike-sharing dataset \citep{E2020353,bike_sharing2020}, the concrete compressive strength dataset \citep{yeh1998modeling}, and the airfoil self-noise dataset \citep{brooks1989airfoil}. Every dataset was split into 75\% training data, 10\% validation data, and 15\% test data. We trained the model on training set and validated it in the validation set with an early stoppage. For every dataset and every partitioning scheme we used the learning rates: 1e-3,1e-4,1e-5,1e-6,1e-7 for hyperparameter tuning. For each combination of partition and dataset, we chose the learning rate that provides the best accuracy score on the test set. In this case, as an accuracy score, we used the Mean Squared Error. The final learning rates that we used are presented in \cref{learning_rates}.

\begin{table*}
 \centering
 \caption{Best Learning Rates for various partitioning schemes on multiple regression datasets.}
 \begin{tabular}{l c c c c c c}
 \hline
 \addlinespace[0.2cm]
  Partition Scheme & Wine & Housing & Obesity & Bike & Concrete & Airfoil\\ 
  \addlinespace[0.2cm]
 \hline
 \addlinespace[0.1cm]
 Random ($M=3$) & 1e-3 & 1e-5 & 1e-5 & 1e-4 & 1e-5 & 1e-4\\
 By Layer & 1e-3 & 1e-5 & 1e-4 & 1e-4 & 1e-5 & 1e-4\\
 By Neurons & 1e-3 & 1e-5 & 1e-4 & 1e-4 & 1e-5 & 1e-5\\
 Fully-Factorized & 1e-5 & 1e-4 & 1e-5 & 1e-4 & 1e-4 & 1e-3\\
\addlinespace[0.1cm]
\hline
\addlinespace[0.2cm]
\end{tabular}
\centering
\label{learning_rates}
\end{table*}

\paragraph{Mixing Time Comparisons on Real-World Data.} We further validated our findings from \cref{sec:exp-large} by evaluating the IAC and ESS on larger datasets using various methods. Both pSGLD and SGHMC were used as base methods in conjunction with S$_d$-SGMCMC using a Bernoulli masking distribution. IAC and ESS were calculated for these methods using the latest 5,000 samples after sampling for 300 epochs; the results of which can be found in \cref{tab:cifar_svhn_results}.

For all three datasets, we see that S$_d$-SGMCMC with every parameter in a different group mixes the fastest against all other methods.

\begin{table}
\centering
\captionof{table}{IAC and ESS metrics for CIFAR-10, SVHN, and FMNIST with various methods. Subscripts after method names refers to number of equally sized parameter groups, with $|\theta|$ meaning every parameter belongs to its own group. Best results are bolded.}
\begin{tabular}{@{$\;$}l@{$\;\!\!$}c@{$\;$}c@{$\;$}c@{$\;$}c@{$\;$}c@{$\;$}c@{$\;$}c@{$\;$}c@{}}  
\toprule
& \multicolumn{2}{c}{CIFAR-10} & \hspace{0.01em} & \multicolumn{2}{c}{SVHN} & \hspace{0.01em} & \multicolumn{2}{c}{FMNIST}\\
\cmidrule{2-3}\cmidrule{5-6}\cmidrule{8-9}
Method & IAC${\scriptstyle \;\!\downarrow}$ & ESS${\scriptstyle \;\!\uparrow}$ &  & IAC${\scriptstyle \;\!\downarrow}$ & ESS${\scriptstyle \;\!\uparrow}$ &  & IAC${\scriptstyle \;\!\downarrow}$ & ESS${\scriptstyle \;\!\uparrow}$\\ 
\midrule
pSGLD & 831 & 6.89  & & 839 & 6.82 & & 774 & 7.16\\
S$_d$-pSGLD$_{|\theta|}$ & \textbf{620} & \textbf{7.22} & & \textbf{803} & \textbf{7.00} & & \textbf{737} & \textbf{7.57}
\vspace{0.35em} \\
SGHMC & 832 & 6.84 & & 858 & 6.59 & & 920 & 5.72\\
S$_d$-SGHMC$_{|\theta|}$ & 751 & 6.26 & & 827 & 6.56 & & 923 & 5.70\\
\bottomrule
\end{tabular}

\end{table}

\begin{table}
\centering
\caption{Classification accuracy for CIFAR-10, SVHN, and FMNIST with various methods. Subscripts after method names refers to number of equally sized parameter groups, with $|\theta|$ meaning every parameter belongs to its own group. Best results are bolded.}
\begin{tabular}{cccc}
\toprule
Method & CIFAR-10 & SVHN & FMNIST\\
\midrule
pSGLD & 92.43 & 94.72 & 91.14\\
S$_d$-pSGLD$_{|\theta|}$ & 92.51 & 94.82 & 92.47\\
SGHMC & 92.35 & 94.73 & 92.81\\
S$_d$-SGHMC$_{|\theta|}$ & 92.54 & 94.65 & 92.71\\
MFVI & 91.52 & 94.79 & 92.24\\
MFVI-I & 91.91 & 95.91 & 92.74\\
Ensemble-I & \textbf{92.68} & \textbf{96.01} & \textbf{93.45}\\
\bottomrule
\end{tabular}
\label{tab:cifar_svhn_results}
\end{table}

\section{Hyperparameter Optimization}
\label{sec:hyperparameter_optimization}

We also tested the proposed method for hyperparameter optimization. We optimize the hyperparameter precision $\lambda$, using gradients while drawing samples from the posterior. We experimented with classification in MNIST with a 2-layer neural network where every layer has 50 neurons. Let us redefine the model with a hyperparameter $\lambda$ in the prior, which prior we assume that is gaussian. Then the prior is $p(\theta|\lambda)$ and the model is $p(x|\theta)p(\theta|\lambda)$. The log-likelihood in this case is $\log p(x|\theta) + \log p(\theta|\lambda)$. The goal is to maximize the log-likelihood $\max_{\theta,\lambda} \log p(x|\theta) + \log p(\theta|\lambda)$, where $\log p(\theta|\lambda) = \frac{\lambda}{2} |\theta|^2 -D \log (\lambda) + \text{const}$. The problem in this goal is that we have a singular solution $\lambda = \infty$ and $\theta=0$. We can achieve that in two steps. In step 1: run MCMC for convergence, given fixed $\lambda$ to generate $\theta_i$s. In step 2 we want to find the maximum $\lambda$ such as $\max_\lambda \sum_i \log p(\theta_i | \lambda)$.

The result of this experiment is presented in \cref{fig:hyperameter_optimization} where it shows the validation accuracy by maximizing the joint loss over the model parameters for various values of $\lambda$. With asterisks we can see the values of $\lambda$ that were automatically chosen by pSGLD and S$_d$-pSGLD.

\begin{figure}
	\centering 
	\includegraphics[width=1\linewidth]{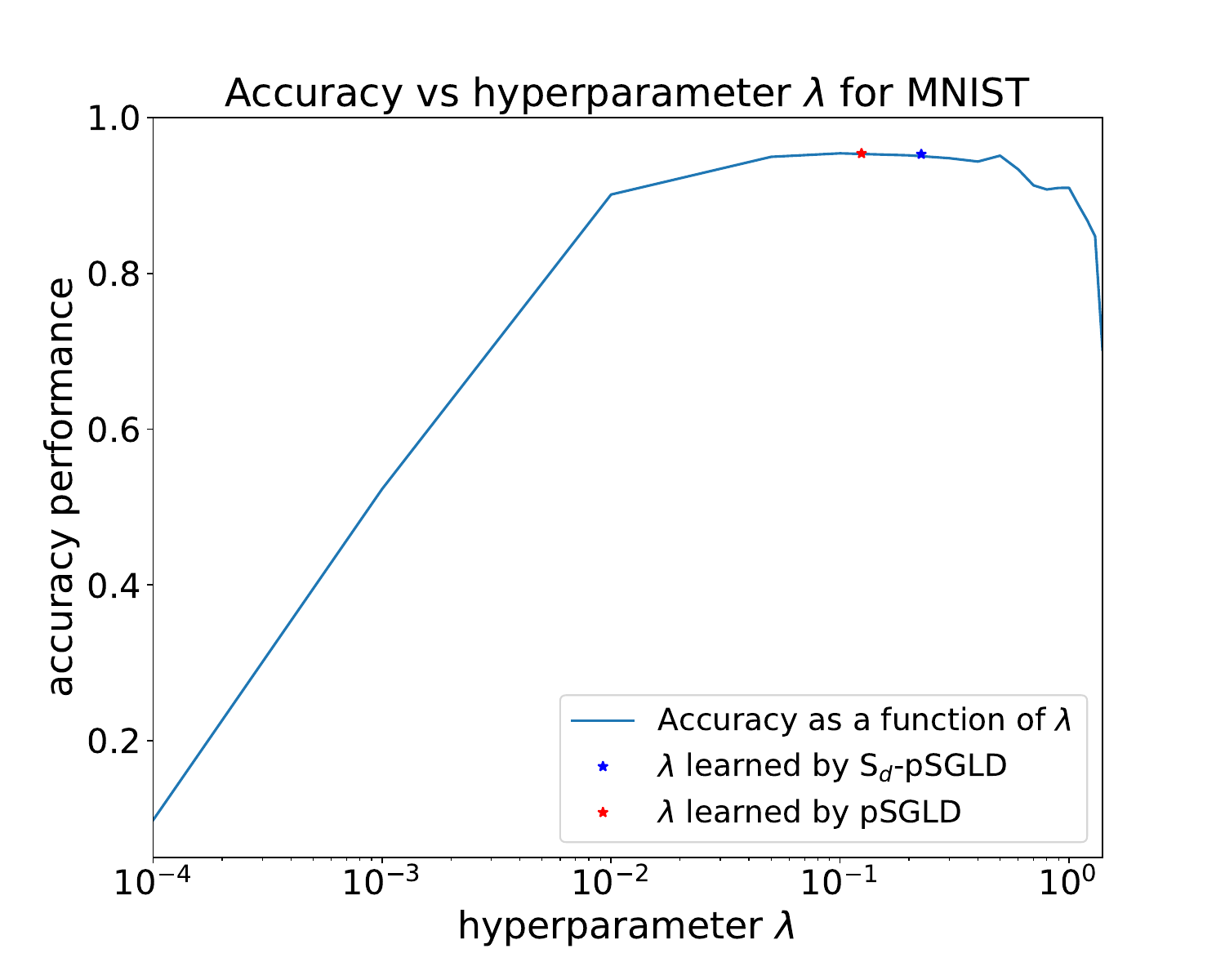}
    \vspace{-1em}
	\caption{Accuracy as a function of hyperparamter $\lambda$ on MNIST with a Fully Connected Nerwork. Stars show the values of $\lambda$ that were automatically chosen by pSGLD and S$_d$-pSGLD.}
	\label{fig:hyperameter_optimization}
\end{figure}

\section{Details on Experiments} \label{sec:exp_details}

\subsection{Qualitative Regression Experiments}

First, we aim to showcase qualitative differences in the empirical posterior distributions generated by a baseline SGMCMC algorithm and our proposed variants. To do so, we consider a regression task where 100 randomly sampled three-dimensional covariates $\{\vec{x}_i=[x_{i,1},x_{i,2},x_{i,3}]^T\}_{i=1,\dots,100}$ are used to sample response values $y_i\sim\mathcal{N}(\vec{w}^T\vec{x}_i + b, \sigma^2)$ where $\vec{w}=[w_1,w_2,w_3]^T=[1.5, -0.8, 1.3]^T$, $b=0.5$, and $\sigma^2=1$. More details on the generation process for $\vec{x}$ can be found in the Supplement.

We choose to fit a linear regression model of the same form as the generation process. $\sigma^2$ is assumed to be known. Thus, $\theta=[w_1,w_2,w_3,b]$. A standard normal distribution is used as the prior for each parameter. Due to conjugacy, the posterior distribution can be calculated analytically. As such, the MAP is roughly $\hat{\theta}_\text{MAP}\approx[0.52, 0.31, 0.47, 0.84]$.

The approximated posterior distributions for $\theta$ are found using SGLD, S-SGLD, and S$_d$-SGLD. For the latter two sampling schemes, two parameter partitions are tested: (i) two groups of parameters where $\theta_1=[w_1,w_2]$ and $\theta_2=[w_3,b]$ and (ii) four groups of parameters where $\theta_1=w_1, \theta_2=w_2, \theta_3=w_3,$ and $\theta_4=b$. For S$_d$-SGLD, $\rho=0.5$ and $K=4$ was used. 

The resulting posterior distributions for $(w_1,w_2)$ and $(w_1,w_3)$ from all five scenarios, with SGLD in the leftmost column as our baseline, can be seen in \cref{fig:lin_reg}. We observe that, as expected, correlations between $(w_1,w_2)$ still exist when they are allocated to the same parameter group and become apparently independent when assigned to different groups. We also note that the variance of the distributions shrink as the parameter space is partitioned into smaller groups. The underestimation of posterior variance is a commonly reported finding for VI techniques and is interesting to note that our non-parametric methods appear to exhibit this behavior as well. Finally, it appears that the S$_d$-SGLD adequately approximates S-SGLD with just slightly higher variances and very minor correlations between parameter groups being exhibited. 


\subsection{Real-World Data Experiments}

\paragraph{Framework Details.} In this subsection, we provide more detailed results for our experiments and a grid search for FMNIST, CIFAR10, and SVHN. We note that all the code apart from the metrics was written in PyTorch \citep{NEURIPS2019_9015}. Regarding the metrics, ESS was adopted from the TensorFlow probability library \citep{dillon2017tensorflow,abadi2016tensorflow} and IAC was calculated in python. For all the experiments, we used a seed of 2. Moreover, we note that we grouped the parameters in an ordered way for S$_d$-pSGLD and S-pSGLD. We denoted previously that $K\rho$ is the number of groups. So every parameter will go to the $i \mod K\rho$ group where $i$ is the parameter index. If, for instance, $K\rho$ is 8 then parameter 1 will go to group 1, parameter 2 will go to group 2, parameter 9 will go to group 1, etc. If $K\rho$ is the same as the number of parameters, every parameter will go into its own group. For the VI methods we used the official repository provided by \citet{ritter2021sparse}.\footnote{https://github.com/microsoft/bayesianize}

\paragraph{Metrics.} IAC is computed as $\tau_{f}=\sum_{\tau=-\infty}^{\infty} \rho_{f}(\tau)$ where $\rho_{f}$ is the normalized autocorrelation function of the stochastic process that generated the chain for $f$. $\rho_{f}$ is estimated via $\hat{\rho}_{f}(\tau)=\hat{c}_{f}(\tau) / \hat{c}_{f}(0)$; where $\hat{c}_{f}(\tau)=\frac{1}{N-\tau} \sum_{n=1}^{N-\tau}\left(f_{n}-\mu_{f}\right)\left(f_{n+\tau}-\mu_{f}\right)$ and $\mu_{f}=\frac{1}{N} \sum_{n=1}^{N} f_{n}$.\footnote{In practice, $\hat{c}_{f}(\tau)$ was calculated using a fast Fourier transform as it is more computationally efficient than directly summing.} ESS measures how many independent samples would be equivalent to a chain of correlated samples and is calculated as $n_{\mathrm{eff}}=\frac{n}{1+(n-1)p}$, where $n$ is the number of samples and $p$ is the autocorrelation.

\paragraph{MNIST.} Regarding MNIST, we ran all the experiments for 500 epochs with a batch size of 500 and a learning rate of 1e-2. For S$_d$-pSGLD, the $K$ is set to 300, which is the forward passes that the model does within 1 epoch. For the grouping of the parameters, for S$_d$-pSGLD we used group sizes of 2, 4, 8, 32, 128, 512, 2048, 4096, 8192, 16384, 32768, and 42200; and for S-pSGLD we used groups sizes of 2, 8, 32, 128, 512, 2048, 4096, and 8192.

\paragraph{FashionMNIST.} We ran all experiments for 500 epochs with a batch size of 500. For S$_d$-SGHMC, the $K$ is set to 2, which is the forward passes that the model does within 1 epoch. We observed with experimenting with $K$ that we do not need to set $K$ very high, and even a small number like 16 that we used here is enough to produce the same results as with an $K$ of 200 or 300. Regarding the parameter partitioning, for S$_d$-SGMCMC, we put every parameter in a different group, and for S-SGMCMC we used groups of 2, 4, 8, and 16. For S$_d$-pSGLD, pSGLD, S$_d$-SGHMC and SGHMC we tested their performances with learning rates of 1e-2, 1e-3, 1e-4, and 1e-5. We conducted a grid search for learning rate, dropout rate, and optimizers to find the best performing models and test them for their accuracy. The learning curves of the best models that we found are depicted in \cref{cifar10_svhn_acc}. In \cref{cifar10_svhn_acc}, S$_d$-pSGLD has $\rho=0.5$ and learning rate equal to 1e-3, pSGLD has learning rate equal to 1e-4, S$_d$-SGHMC has $\rho=0.5$ and learning rate equal to 1e-2 and SGHMC has learning rate equal to 1e-2.

\begin{table*}[!htbp]
\caption{Evaluation Metrics on FashionMNIST with pSGLD, S$_d$-pSGLD and S-pSGLD}
 \begin{tabular}{c c c c c c}
 \hline
  Method & dropout & LR & IAC & ESS & Accuracy\\ 
 \hline\hline
 S$_d$-pSGLD$_{|\theta|}$ & 0.1 & 1e-05 & 1018 & 5.63 & 0.918\\
 \hline
 S$_d$-pSGLD$_{|\theta|}$ & 0.1 & 1e-04 & 808 & 7 & 0.925\\
 \hline
  S$_d$-pSGLD$_{|\theta|}$ & 0.1 & 1e-03 & 754 & 7.48 & 0.924\\ 
 \hline
 S$_d$-pSGLD$_{|\theta|}$ & 0.1 & 1e-02 & 723 & 8.05 & 0.911\\  
 \hline
 S$_d$-pSGLD$_{|\theta|}$ & 0.5 & 1e-05 & 778 & 7.08 & 0.923\\   
 \hline
 S$_d$-pSGLD$_{|\theta|}$ & 0.5 & 1e-04 & 777 & 7.15 & 0.923\\ 
 \hline
 S$_d$-pSGLD$_{|\theta|}$ & 0.5 & 1e-03 & 737 & 7.57 & 0.925\\ 
 \hline
 S$_d$-pSGLD$_{|\theta|}$ & 0.5 & 1e-02 & 677 & 8.24 & 0.91\\ 
 \hline
 pSGLD & - & 1e-5 & 779 & 7.09 & 0.924\\ 
 \hline
 pSGLD & - & 1e-4 & 774 & 7.16 & 0.911\\ 
 \hline
 pSGLD & - & 1e-3 & 770 & 7.26 & 0.809\\ 
 \hline
 pSGLD & - & 1e-2 & 745 & 7.48 & 0.724\\ 
 \hline
 S-pSGLD$_2$ & - & 1e-3 & 740 & 7.55 & 0.918\\  
 \hline
 S-pSGLD$_4$ & - & 1e-3 & 751 & 7.45 & 0.919\\ 
 \hline
 S-pSGLD$_8$ & - & 1e-3 & 776 & 7.24 & 0.919\\ 
 \hline
 S-pSGLD$_{16}$ & - & 1e-3 & 855 & 6.64 & 0.916\\ 
 \hline
\end{tabular}
\centering
\label{fmnist_psgld}
\end{table*}

\begin{table*}[!htbp]
\caption{Evaluation Metrics on FashionMNIST with SGHMC, S$_d$-SGHMC and S-SGHMC}
 \begin{tabular}{c c c c c c}
 \hline
  Method & dropout & LR & IAC & ESS & Accuracy\\ 
 \hline\hline
 S$_d$-SGHMC$_{|\theta|}$ & 0.1 & 1e-05 & 782 & 7.08 & 0.412\\
 \hline
 S$_d$-SGHMC$_{|\theta|}$ & 0.1 & 1e-04 & 888 & 6.41 & 0.796\\
 \hline
  S$_d$-SGHMC$_{|\theta|}$ & 0.1 & 1e-03 & 793 & 6.98 &0.92\\ 
 \hline
 S$_d$-SGHMC$_{|\theta|}$ & 0.1 & 1e-02 & 1113 & 5.06 & 0.922\\  
 \hline
 S$_d$-SGHMC$_{|\theta|}$ & 0.5 & 1e-05 & 790 & 6.93 & 0.207\\   
 \hline
 S$_d$-SGHMC$_{|\theta|}$ & 0.5 & 1e-04 & 789 & 6.9 & 0.758\\ 
 \hline
 S$_d$-SGHMC$_{|\theta|}$ & 0.5 & 1e-03 & 796 & 6.81 & 0.92\\ 
 \hline
 S$_d$-SGHMC$_{|\theta|}$ & 0.5 & 1e-02 & 923 & 5.70 & 0.927\\ 
 \hline
 SGHMC & - & 1e-5 & 791 & 6.93 & 0.206\\ 
 \hline
 SGHMC & - & 1e-4 & 789 & 6.9 & 0.751\\ 
 \hline
 SGHMC & - & 1e-3 & 795 & 6.83 & 0.92\\ 
 \hline
 SGHMC & - & 1e-2 & 920 & 5.72 & 0.928\\ 
 \hline
 S-SGHMC$_2$ & - & 1e-2 & 928 & 5.67 & 0.928\\  
 \hline
 S-SGHMC$_4$ & - & 1e-2 & 915 & 5.77 & 0.927\\ 
 \hline
 S-SGHMC$_8$ & - & 1e-2 & 1142 & 4.87 & 0.919\\ 
 \hline
 S-SGHMC$_{16}$ & - & 1e-2 & 1121 & 4.92 & 0.906\\ 
 \hline
\end{tabular}
\centering
\label{fmnist_sghmc}
\end{table*}

\paragraph{CIFAR10.} The setup is similar to the one we used in FashionMNIST as we ran all experiments for 500 epochs with a batch size of 128. For S$_d$-SGHMC, the $K$ is set to 2, which $K$ is the forward passes that the model does within 1 epoch. Regarding the parameter partitioning, for S$_d$-SGMCMC, we put every parameter in a different group, and for S-SGMCMC we used groups of 2, 4, 8, and 16. For S$_d$-pSGLD, pSGLD, S$_d$-SGHMC and SGHMC we tested their performances with learning rates of 1e-2, 1e-3, 1e-4, and 1e-5. We conducted a grid search for learning rate, dropout rate, and optimizers to find the best performing models and test them for their accuracy. The learning curves of the best models that we found are depicted in \cref{cifar10_svhn_acc}. In \cref{cifar10_svhn_acc}, S$_d$-pSGLD has $\rho=0.5$ and learning rate equal to 1e-5, pSGLD has learning rate equal to 1e-5, S$_d$-SGHMC has $\rho=0.5$ and learning rate equal to 1e-3 and SGHMC has learning rate equal to 1e-3.

\begin{table*}[!htbp]
\caption{Evaluation Metrics on CIFAR10 with pSGLD, S$_d$-pSGLD and S-pSGLD}
 \begin{tabular}{c c c c c c c}
 \hline
  Method & dropout & LR & IAC & ESS & Accuracy\\ 
 \hline\hline
 S$_d$-pSGLD$_{|\theta|}$ & 0.1 & 1e-02 & 623 & 7.23 & 0.191\\
 \hline
 S$_d$-pSGLD$_{|\theta|}$ & 0.1 & 1e-03 & 572 & 7.6 & 0.896\\
 \hline
  S$_d$-pSGLD$_{|\theta|}$ & 0.1 & 1e-04 & 692 & 6.45 & 0.921\\ 
 \hline
 S$_d$-pSGLD$_{|\theta|}$ & 0.1 & 1e-05 & 922 & 4.88 & 0.922\\  
 \hline
 S$_d$-pSGLD$_{|\theta|}$ & 0.5 & 1e-02 & 546 & 8.01 & 0.768\\   
 \hline
 S$_d$-pSGLD$_{|\theta|}$ & 0.5 & 1e-03 & 582 & 7.88 & 0.918\\ 
 \hline
 S$_d$-pSGLD$_{|\theta|}$ & 0.5 & 1e-04 & 691 & 6.85 & 0.926\\ 
 \hline
 S$_d$-pSGLD$_{|\theta|}$ & 0.5 & 1e-05 & 620 & 7.22 & 0.925\\ 
 \hline
 pSGLD & - & 1e-2 & 716 & 8.01 & 0.666\\ 
 \hline
 pSGLD & - & 1e-3 & 740 & 7.87 & 0.866\\ 
 \hline
 pSGLD & - & 1e-4 & 780 & 7.41 & 0.914\\ 
 \hline
 pSGLD & - & 1e-5 & 831 & 6.89 & 0.924\\ 
 \hline
 S-pSGLD$_2$ & - & 1e-3 & 600 & 7.44 & 0.894\\  
 \hline
 S-pSGLD$_4$ & - & 1e-3 & 599 & 7.4 & 0.905\\ 
 \hline
 S-pSGLD$_8$ & - & 1e-3 & 709 & 6.41 & 0.881\\ 
 \hline
 S-pSGLD$_{16}$ & - & 1e-3 & 767 & 5.93 & 0.836\\ 
 \hline
\end{tabular}
\centering
\label{table:cifar_psgld}
\end{table*}

\begin{table*}[!htbp]
\caption{Evaluation Metrics on CIFAR10 with SGHMC, S$_d$-SGHMC and S-SGHMC}
 \begin{tabular}{c c c c c c c}
 \hline
  Method & dropout & LR & IAC & ESS & Accuracy\\ 
 \hline\hline
 S$_d$-SGHMC$_{|\theta|}$ & 0.1 & 1e-02 & 608 & 7.16 & 0.91\\
 \hline
 S$_d$-SGHMC$_{|\theta|}$ & 0.1 & 1e-03 & 975 & 4.6 & 0.922\\
 \hline
  S$_d$-SGHMC$_{|\theta|}$ & 0.1 & 1e-04 & 654 & 6.63 & 0.869\\ 
 \hline
 S$_d$-SGHMC$_{|\theta|}$ & 0.1 & 1e-05 & 652 & 6.65 & 0.724\\  
 \hline
 S$_d$-SGHMC$_{|\theta|}$ & 0.5 & 1e-02 & 584 & 7.7 & 0.918\\   
 \hline
 S$_d$-SGHMC$_{|\theta|}$ & 0.5 & 1e-03 & 751 & 6.26 & 0.925\\ 
 \hline
 S$_d$-SGHMC$_{|\theta|}$ & 0.5 & 1e-04 & 679 & 6.73 & 0.886\\ 
 \hline
 S$_d$-SGHMC$_{|\theta|}$ & 0.5 & 1e-05 & 772 & 6.01 & 0.778\\ 
 \hline
 SGHMC & - & 1e-2 & 727 & 7.94 & 0.86\\ 
 \hline
 SGHMC & - & 1e-3 & 832 & 6.84 & 0.924\\ 
 \hline
 SGHMC & - & 1e-4 & 862 & 6.57 & 0.885\\ 
 \hline
 SGHMC & - & 1e-5 & 858 & 6.6 & 0.746\\ 
 \hline
 S-SGHMC$_2$ & - & 1e-3 & 583 & 7.49 & 0.913\\  
 \hline
 S-SGHMC$_4$ & - & 1e-3 & 624 & 7.03 & 0.919\\ 
 \hline
 S-SGHMC$_8$ & - & 1e-3 & 904 & 4.97 & 0.908\\ 
 \hline
 S-SGHMC$_{16}$ & - & 1e-3 & 822 & 5.47 & 0.774\\ 
 \hline
\end{tabular}
\centering
\label{table:cifar_sghmc}
\end{table*}



\paragraph{SVHN.} We also ran all of the experiments for 500 epochs with a batch size of 128. Here for S$_d$-SGHMC, the $K$ is set to 2, which is the forward passes that the model does within 1 epoch. We note that $K$ here is less than on CIFAR10 and FashionMNIST, but as we mentioned before, this does not make a difference for our results, as we have tested. Regarding the parameter partitioning, for S$_d$-SGMCMC, we put every parameter in a different group, and for S-SGMCMC we used groups of 2, 4, 8, and 16. For S$_d$-pSGLD, pSGLD, S$_d$-SGHMC and SGHMC we tested their performances with learning rates of 1e-1, 1e-2, 1e-3, 1e-4, 1e-5, 1e-6. We conducted a grid search for learning rate, dropout rate, and optimizers to find the best performing models and test them for their accuracy. The learning curves of the best models that we found are depicted in \cref{cifar10_svhn_acc}. In \cref{cifar10_svhn_acc}, S$_d$-pSGLD has $\rho=0.5$ and learning rate of 1e-4, pSGLD has a learning rate of 1e-5, S$_d$-SGHMC has $\rho=0.5$ and learning rate equal to 1e-2 and SGHMC has learning rate equal to 1e-3.

\paragraph{VI methods.} For all VI methods, MFVI, MFVI-I and Ensemble-I we tested their performances with learning rates of 1e-2,1e-3 and 1e-4. We used a seed of 2 and 300 epochs for training. The rest of the settings are the same as the ones provided in the official repository. In table you can see the classification accuracy results for various learning rates.

\begin{table*}[!htb]
\centering
    \caption{Classification accuracy scores for VI methods on CIFAR10, SVHN and FMNIST.}
    \begin{subtable}{.3\linewidth}
      \centering
        \caption{CIFAR10}
        \begin{tabular}{lll}
         \hline
         \addlinespace[0.2cm]
          Method & LR & Accuracy\\ 
          \addlinespace[0.2cm]
         \hline
         \addlinespace[0.1cm]
         MFVI & 1e-2 & 80.06\\
         MFVI & 1e-3 & 91.52\\
         MFVI & 1e-4 & 87.5\\
         MFVI-I & 1e-2 & 91.73\\
         MFVI-I & 1e-3 & 91.91\\
         MFVI-I & 1e-4 & 87.6\\
         Ensemble-I & 1e-2 & 92.68\\
         Ensemble-I & 1e-3 & 91.96\\
         Ensemble-I & 1e-4 & 87.22\\
        \addlinespace[0.1cm]
        \hline
        \addlinespace[0.2cm]
        \end{tabular}
    \end{subtable}%
    \begin{subtable}{.3\linewidth}
      \centering
        \caption{SVHN}
        \begin{tabular}{lll}
         \hline
         \addlinespace[0.2cm]
          Method & LR & Accuracy\\ 
          \addlinespace[0.2cm]
         \hline
         \addlinespace[0.1cm]
         MFVI & 1e-2 & 94.79\\
         MFVI & 1e-3 & 94.36\\
         MFVI & 1e-4 & 94.19\\
         MFVI-I & 1e-2 & 95.91\\
         MFVI-I & 1e-3 & 95.47\\
         MFVI-I & 1e-4 & 94.78\\
         Ensemble-I & 1e-2 & 96.01\\
         Ensemble-I & 1e-3 & 95.89\\
         Ensemble-I & 1e-4 & 95.08\\
        \addlinespace[0.1cm]
        \hline
        \addlinespace[0.2cm]
        \end{tabular}
    \end{subtable} 
    \begin{subtable}{.3\linewidth}
      \centering
        \caption{FMNIST}
        \begin{tabular}{lll}
         \hline
         \addlinespace[0.2cm]
          Method & LR & Accuracy\\ 
          \addlinespace[0.2cm]
         \hline
         \addlinespace[0.1cm]
         MFVI & 1e-2 & 90.93\\
         MFVI & 1e-3 & 92.24\\
         MFVI & 1e-4 & 90.04\\
         MFVI-I & 1e-2 & 92.70\\
         MFVI-I & 1e-3 & 92.74\\
         MFVI-I & 1e-4 & 91.48\\
         Ensemble-I & 1e-2 & 93.45\\
         Ensemble-I & 1e-3 & 93.23\\
         Ensemble-I & 1e-4 & 91.54\\
        \addlinespace[0.1cm]
        \hline
        \addlinespace[0.2cm]
        \end{tabular}
    \end{subtable} 
\end{table*}

\end{document}